\DeclareMathAlphabet{\mathcal}{OMS}{cmsy}{m}{n}
\newtheorem{defn}[theorem]{Definition}
\newtheorem{cor}[theorem]{Corollary}
\newcommand\indep{\protect\mathpalette{\protect\independenT}{\perp}}
\def\independenT#1#2{\mathrel{\rlap{$#1#2$}\mkern4mu{#1#2}}}
\newcommand{\T}{\mathcal{T}}
\newcommand{\frob}[2]{\langle #1,\,#2\rangle_\mathrm{F}}
\renewcommand{\phi}[1][\cdot]{\varphi(#1)}
\renewcommand{\k}[1][\cdot,\cdot]{\kappa(#1)}
\newcommand{\dcov}{\mathrm{dCov}}
\renewcommand{\vec}[1]{\mathrm{vec}(#1)}
\newcommand{\cov}{\mathrm{cov}}
\newcommand{\E}{\mathrm{E}}
\newcommand{\defeq}{\vcentcolon=}
\newcommand{\ue}{\sim_{\scaleto{\mathrm{U}}{4pt}}}
\renewcommand{\oe}{\sim_{\scaleto{\mathrm{O}}{4pt}}}
\newcommand{\ke}{\sim_{\scaleto{\mathrm{K}}{4pt}}}
\newcommand{\Q}{\Omega}
\newcommand{\mcon}{\mathop{\slashed{\bot}_m}}
\DeclareMathSymbol{\mlq}{\mathord}{operators}{``}
\DeclareMathSymbol{\mrq}{\mathord}{operators}{`'}
\newcommand{\norm}[1]{\left\lVert#1\right\rVert}
\newcommand{\note}{\fxnote*}
\title[A Distance Covariance-based Kernel for Causal Clustering]{A Distance Covariance-based Kernel for Nonlinear Causal Clustering in Heterogeneous Populations}
\begin{document}

\maketitle

\begin{abstract}%
   We consider the problem of causal structure learning in the setting of heterogeneous populations, i.e., populations in which a single causal structure does not adequately represent all population members, as is common in biological and social sciences.
  To this end, we introduce a distance covariance-based kernel designed specifically to measure the similarity between the underlying nonlinear causal structures of different samples.
  Indeed, we prove that the corresponding feature map is a statistically consistent estimator of nonlinear independence structure, rendering the kernel itself a statistical test for the hypothesis that sets of samples come from different generating causal structures.
  Even stronger, we prove that the kernel space is isometric to the space of causal ancestral graphs, so that distance between samples in the kernel space is guaranteed to correspond to distance between their generating causal structures.
  This kernel thus enables us to perform clustering to identify the homogeneous subpopulations, for which we can then learn causal structures using existing methods.
  Though we focus on the theoretical aspects of the kernel, we also evaluate its performance on synthetic data and demonstrate its use on a real gene expression data set.

\end{abstract}

\begin{keywords}%
  graphical causal models; distance covariance; whole-graph embeddings; clustering.
\end{keywords}

\section{Introduction}\label{sec:intro}
Learning causal relationships from observational and experimental data is one of the fundamental goals of scientific research, and causal inference methods are thus used in a wide variety of fields.
The resulting variety of applications nevertheless share some common difficulties, such as causal inference from complex time-series data \citep{Eichler_2012} or the underlying causal structure being obscured by unmeasured confounders \citep{Greenland_1999}.
Another common difficulty, especially for applications in the biological and social sciences, is causal inference from heterogeneous populations \citep{Xie_2013,brand2013causal}---addressing this difficulty is our main motivation. 

In general terms, we understand a heterogeneous population to be one whose members are not adequately described by a single model but rather better described by a collection of models.
Within our context of causal structure learning, this means a population is heterogeneous if some samples are generated by different causal structures---we call this \textit{structural} heterogeneity.
We note that there are other kinds of heterogeneity, such as that in samples generated by different joint distributions over the same causal structure, but these are outside our scope in this work.

A specific example of structural heterogeneity can be found in genetics:
causal methods are used to learn the structure of gene regulatory networks \citep{altay2010inferring}, and
gene expression data from a single recording or experiment may include thousands of genes, many of which are involved in entirely different networks \citep{liu2015reverse};
thus, attempting to learn a single causal structure for all of the genes will obscure the fact that different sets of them have different structures.

The bulk of our work in this paper, and our main contribution, is to introduce the \textit{dependence contribution kernel}, which facilitates a flexible and easily extensible approach to causal clustering:
first perform clustering to identify structurally homogeneous subsets of samples, and then proceed with the actual learning task on each cluster.
We prove that our kernel space is isometric to the space of causal ancestral graphs, and hence our kernel can be used to find clusters that minimize structural heterogeneity for causal structure learning tasks.
Furthermore, the kernel is derived from the distance covariance \citep{Szekely_2007}, imbuing it with the ability to detect nonlinear dependence.
It can easily be used in a wide array of clustering algorithms, such as \(k\)-means, DBSCAN, spectral clustering, or any other method that analogously makes use of a similarity (or distance) measure between samples \citep{filippone2008survey}.

The rest of the paper is organized as follows:
We finish this section by discussing some of the most relevant related work from the causal inference and statistics literature.
All of Section \ref{sec:theory} is devoted to the theory underlying our dependence contribution kernel,
including a comparison of the familiar product-moment covariance with the distance covariance (Section \ref{sec:prod-moment-vers}),
defining an equivalence class of causal models with a convenient representation in the kernel space (Section \ref{sec:causal-graphs-kernel}),
and the actual definition of our kernel and proofs of its relevant properties (Section \ref{sec:depend-contr-kern}).
Next, in Section \ref{sec:application}, we supplement the preceding theoretical analysis of our kernel by evaluating its use in a combination of clustering, dimensionality reduction, and structure learning tasks on synthetic and real data.
Finally, we conclude in Section \ref{sec:discussion} mentioning 
possible future work.

\subsection{Related Work}
\label{sec:related-work}

Causal inference in heterogeneous populations sometimes refers to data-fusion \citep{Bareinboim_2016}, i.e., combining known homogeneous subpopulations and performing causal inference on the resulting heterogeneous population, or similarly, it can refer to meta-learning using known subpopulations \citep{DBLP:journals/corr/abs-1912-03960}.
Other times, it refers to estimating heterogeneous treatment effects \citep{xie2012estimating,athey2015machine}.
However, in our case, the subpopulations are not known and we rather consider the problem of learning which samples come from which subpopulation, and these are differentiated according to structure instead of treatment effect.

Previous work on causal clustering has focused more on the causal modeling aspect, using stronger assumptions about the underlying structures to learn more detailed models.
For example, \cite{Kummerfeld:2014,Kummerfeld:2016} focus on causal clustering in measurement models, with the goal of clustering different features together to study their latent causal structure, based on tetrad constraints within the linear product-moment covariance matrix.
\cite{huang2019specific} define a class of causal models facilitating mechanism-based clustering, learning causal models both for clusters of samples as well as a shared one for all samples, assuming the underlying structures are linear non-Gaussian.
\cite{saeed2020causal} characterize distributions arising from mixtures of directed acyclic graph (DAG) causal models (i.e., causal models without latent or selection variables), trying to learn both the component DAGs and a representation of how they are mixed.
All of these approaches, like most causal inference methods, make specific (and for some applications, restrictive) assumptions about the underlying distributions or causal structures.

In contrast, our method is not tied to specific distributional assumptions such as linearity or (non)Gaussianity---we assume there are enough samples for statistical inference, as well as the usual causal Markov and faithfulness assumptions.
For the first step, we cluster samples together if they (implicitly, in the kernel space) have similar nonlinear independence structures.
For the second step, causal structure learning, any existing method (along with its corresponding assumptions) can in principle be used.
In our gene expression data application (Section \ref{sec:real-data}), the measurement dependence inducing latent (MeDIL) causal model framework \citep{mcm_paper_short}, which assumes the data consists of measurement variables that are causally connected only through latent variables, seems appropriate, however other applications can easily use other methods.
For example, component and mixture DAGs \citep{saeed2020causal} can be better learned when one first knows which samples come from which component---clustering with our kernel ensures samples in different clusters come from different DAGs, and so using their method instead of the MeDIL framework would be a natural choice for applications in which a DAG (without any latents) is more appropriate.


\note{removed}{
Finally, there is some work from the statistics literature that sounds superficially similar to our distance covariance-based kernel but is conceptually quite different.
Namely, another well-known measure of nonlinear independence, the Hilbert-Schmidt Independence Criterion \citep{Gretton_2005,Gretton:2007}, is part of a class of reproducing kernel Hilbert space- (RKHS-) based dependence measures that \cite{sejdinovic2013equivalence} show is equivalent to distance-based measures such as the distance covariance.
Our dependence contribution kernel, unlike these, is not a dependence measure between features---it rather uses the distance covariance to measure the similarity of samples based on patterns in the dependence structure of their features, and is rather more like a whole-graph embedding \citep{cai2018comprehensive,maddalena2020whole}.
}

\section{Theory}
\label{sec:theory}

\subsection{Product-moment Covariance, Distance Covariance, and Dependence Contribution}
\label{sec:prod-moment-vers}

Though there is more to causal relationships than probabilistic dependence, causal inference methods based on graphical models ultimately rely on at least implicitly learning conditional independence  (CI) relations.
CI relations can be estimated in many ways, with different dependence measures and tests each having their own theoretical guarantees and being better suited for distributions of various different kinds of data (e.g., categorical, discrete, or continuous) and with various kinds of relationships (e.g., linear, monotonic nonlinear, arbitrary nonlinear) and with different testing assumptions \cite[see][for a comprehensive overview]{tjostheim2018statistical}.

A widely used measure of dependence is the \textit{product-moment covariance}, often just called covariance, which is defined for two zero-mean random variables \(X_1\) and \(X_2\) as the scalar value \(\cov(X_1, X_2) = \E[X_1 X_2]\).
This can be extended from a pair of random variables to every pair of variables in a random vector, thus returning a matrix instead of a scalar.
The covariance matrix for a vector of zero-mean random variables \(\mathbf{X} = (X_1, \ldots, X_m)\) can be estimated from  a set \(S \in \mathbb{R}^{n,m}\) of \(n\) samples as \(\hat\Sigma_\mathbf{X} = \frac{1}{n}S^\top S\), and the \(j,j'\)-th value of \(\hat\Sigma_\mathbf{X}\) is thus the estimate \(\hat{\cov}(X_j, X_j')\).

Probabilistic independence (denoted \(\indep\)) of two random variables implies that their product-moment covariance is zero, i.e., \(X_j \indep X_{j'} \implies \cov(X_j, X_{j'}) = 0\) (importantly, the inverse of this does not hold).
Thus, the estimated product-moment covariance can be used in statistical hypothesis testing for independence \citep[Ch. 10]{wasserman2013all}:
\(X_j\) and \(X_{j'}\) are taken to be independent if and only if \(\hat\cov(X_j, X_{j'})\) is sufficiently close to 0.
However, this method has an important flaw: the product-moment covariance as a test statistic is only valid against \textit{linear} dependence.

\cite{Szekely_2007} introduce the \textit{distance covariance} to remedy this problem: random variables are probabilistically independent if and only if their distance covariance is zero, i.e., \(X_j \indep X_{j'} \iff \dcov(X_j, X_{j'}) = 0\), resulting in the estimated distance covariance being a valid test statistic against all types of dependence.
The distance covariance is related to the product-moment covariance by \(\dcov^2(X_j, X_{j'}) = \cov(|X_j - X_j'|, |X_{j'} - X_{j'}'|) - 2\cov(|X_j - X_j'|, |X_{j'} - X_{j'}''|)\), where \((X_j', X_{j'}')\)  and \((X_j'', X_{j'}'')\) are independent and identically distributed (iid) copies of \((X_j, X_{j'})\) \citep{Sz_kely_2014}.
The key intuition here is that the distances (e.g., \(| X_j - X_j'|\)) constitute a nonlinear projection, so that using the linear product-moment covariance in this projected space allows for the detection of nonlinear dependence in the original space.

Note that \(\dcov\) is typically defined to be a scalar value when taken between two arbitrary-dimensional random vectors, but our restricted presentation of it above in terms of random variables is to make it more obviously analogous to the product-moment covariance between random variables.
Thus, corresponding to \(\hat\Sigma_\mathbf{X}\) for random vectors, we define the following:

\begin{defn}
  \label{defn:dcov}  
   Let \(S \in \mathbb{R}^{n,m}\) be a set of \(n\) samples from the vector of random variables \(\mathbf{X} = (X_1, \ldots, X_m)\).
  For each \(j \in \{1, \dots, m\}\) and \(i,i' \in \{1, \ldots, n\}\), define the pairwise distance matrix \(D^j\), with values given by \(D^j_{i, i'} \defeq |S_{i, j} - S_{i', j}|\).
  Now define the corresponding doubly-centered matrices \(C^j_{i,i'} :=  D^j_{i,i'} - \bar{D^j}_{i, \cdot} - \bar{D^j}_{\cdot, i'} + \bar{D^j}_{\cdot, \cdot}\), where putting a bar over the matrix and replacing an index \(i\) or \(i'\) with \(\cdot\) denotes taking the mean over that index.
  Define the matrix \(L \in \mathbb{R}^{n^2, m}\) so that each column is a flattened doubly-centered distance matrix, \(L \defeq (\mathrm{vec}(C^1), \ldots, \mathrm{vec}(C^m))\), where \(\mathrm{vec}(C^j)\) denotes ``flattening'' matrix \(C^j\) into a column vector.
  Finally, the estimated \textit{distance covariance matrix} over sample \(S\) is defined as:
  \(\quad\hat\Delta_\mathbf{X} \defeq \frac{1}{n^2} L^\top L.\) 
\end{defn}

Analogous to \(\hat\Sigma_\mathbf{X}\), the \(j,j'\)-th entry of \(\hat\Delta_\mathbf{X}\) corresponds to \(\hat\dcov^2(X_j, X_{j'})\)---indeed it is mathematically equivalent to computing each pairwise distance covariance value and then manually filling in the matrix.
The novelty of our Definition \ref{defn:dcov} is in finding a matrix of pairwise values instead of a single value for the distance covariance between random vectors, which helps provide an intuition for our next definition:

\begin{defn}
  \label{defn:map}  
  Let \(S \in \mathbb{R}^{n,m}\) be a set of \(n\) samples from the vector of random variables \(\mathbf{X} = (X_1, \ldots, X_m)\); note that we consistently use indices \(i, i' \in \{1, \ldots, n\}\) and \(j, j' \in \{1, \ldots, m\}\).
  Let \(D \in \mathbb{R}^{n, n, m}\) denote the 3-dimensional array of stacked pairwise distance matrices defined by \(D_{i, i', j} \defeq |S_{i, j} - S_{i', j}|\), and use \(C \in \mathbb{R}^{n, n, m}\) to denote these same distance matrices after being doubly-centered, i.e.,  \(C_{i,i',j} :=  D_{i,i',j} - \bar{D}_{i, \cdot, j} - \bar{D}_{\cdot, i', j} + \bar{D}_{\cdot, \cdot, j}\), where replacing an index \(i\) or \(i'\) with \(\cdot\) denotes the entire (lower-dimensional) subarray over that index, and writing a bar, \(\bar D\), denotes taking the mean over that subarray.
  Then standardize the doubly-centered distances to get \(Z_{i, i', j} \defeq \frac{C_{i, i', j}}{\bar{D}_{\cdot,\cdot, j}}\).
  Finally, the \emph{dependence contribution map}, \(\varphi : \mathbb{R}^m \to \mathbb{R}^{m, m}\), is defined as
  \[\phi[S_{i, \cdot}] \defeq Z_{i, \cdot, \cdot}^\top Z_{i, \cdot, \cdot} - \T(\alpha),\]
  where \(\T(\alpha) \in \mathbb{R}^{m, m}\) is a matrix of scaled critical values corresponding to a given significance level \(\alpha\) with zeros along the diagonal, i.e., \(\T(\alpha)_{j,j'} =
  \begin{cases}
    0,& \text{if } j=j'\\
    \frac{1}{n}\chi^2_{1-\alpha}(1),& \text{otherwise}
\end{cases}
\), with \(\chi^2_{1-\alpha}(1)\) being the \(1-\alpha\) quantile of the chi-square distribution with 1 degree of freedom.
\end{defn}

Notice the similarity between Definitions \ref{defn:map} and \ref{defn:dcov}: if we set \(\T(\alpha)\) to be a matrix of 0s and forgo standardization (i.e., use \(C\) instead of \(Z\)), then \(\frac{1}{n^2}\sum_{i=1}^{n} \phi[S_{i, \cdot}] = \hat\Delta_\mathbf{X}\).
Now, the differences:
\(\hat\Delta_\mathbf{X}\) is a single matrix computed over an entire set of samples, whereas \(\varphi\) is a map that projects each given sample to the new feature space;
each entry of \(\hat\Delta_\mathbf{X}\) is simply a distance covariance value, whereas each entry of the sum of \(\varphi(S_{i, \cdot})\) over \(i\), by using standardization (using \(Z\) instead of \(C\)) and subtracting a critical value, corresponds to the result of using a distance covariance value in a statistical hypothesis test for independence---indeed:

\begin{lemma}
  \label{lemma:phi}
  Let \(S \in \mathbb{R}^{n, m}\) be a set of \(n\) iid samples from random variables \(X_1, \ldots, X_m\) with finite first moments.
  For a given significance level \(\alpha\), under the null hypothesis of \(X_{j} \indep X_{j'}\), rejecting
  \(h_\emptyset\text{ if } \big(\sum_{i=1}^{n}\phi[S_{i, \cdot}]\big)_{j, j'} > 0\) 
    is a statistically consistent test against all types of dependence.
\end{lemma}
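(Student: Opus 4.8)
The plan is to unwind the definition of $\phi$ until the stated rejection criterion becomes exactly the distance-covariance independence test of \citet{Szekely_2007}, and then to quote their asymptotic theory. First I would evaluate $\big(\sum_{i=1}^{n}\phi[S_{i, \cdot}]\big)_{j, j'}$ in closed form. By Definition~\ref{defn:map}, for $j\neq j'$ the $(j,j')$ entry of $\phi[S_{i,\cdot}] = Z_{i,\cdot,\cdot}^\top Z_{i,\cdot,\cdot} - \T(\alpha)$ equals $\sum_{i'=1}^{n} Z_{i,i',j}Z_{i,i',j'} - \T(\alpha)_{j,j'}$, and since $Z_{i,i',j} = C_{i,i',j}/\bar{D}_{\cdot,\cdot,j}$, summing over $i$ gives
\[
\Big(\sum_{i=1}^{n}\phi[S_{i,\cdot}]\Big)_{j,j'}
= \frac{1}{\bar{D}_{\cdot,\cdot,j}\,\bar{D}_{\cdot,\cdot,j'}}\sum_{i,i'=1}^{n} C_{i,i',j}\,C_{i,i',j'} \;-\; n\,\T(\alpha)_{j,j'}.
\]
By the remark following Definition~\ref{defn:dcov}, $\tfrac{1}{n^2}\sum_{i,i'} C_{i,i',j}C_{i,i',j'}$ is the (biased) empirical squared distance covariance $\hat\dcov^2(X_j,X_{j'})$, while $\bar{D}_{\cdot,\cdot,j}\bar{D}_{\cdot,\cdot,j'}$ is the empirical counterpart of $S_2 \defeq \E|X_j - X_j'|\,\E|X_{j'}-X_{j'}'|$, the quantity used to normalize the test statistic in \citet{Szekely_2007}. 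Together with $n\,\T(\alpha)_{j,j'} = \chi^2_{1-\alpha}(1)$, this identifies the criterion $\big(\sum_i\phi[S_{i,\cdot}]\big)_{j,j'}>0$ with the rule that rejects when an appropriately $n$-scaled version of $\hat\dcov^2(X_j,X_{j'})/\big(\bar{D}_{\cdot,\cdot,j}\bar{D}_{\cdot,\cdot,j'}\big)$ exceeds the $1-\alpha$ quantile of $\chi^2_1$ --- i.e.\ with the independence test of \citet{Szekely_2007} for $X_j \indep X_{j'}$.

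With that reduction in hand, both required properties are standard facts about this test. For the level: under $H_0\colon X_j\indep X_{j'}$ and finite first moments, \citet{Szekely_2007} show the normalized statistic converges in distribution to $Q = \sum_{k\ge 1}\lambda_k\xi_k^2$, with $\xi_k$ i.i.d.\ standard normal and nonnegative weights $\lambda_k$ summing to $1$, so $\E[Q]=1$; their tail inequality $\mathrm{P}\big(Q \ge \chi^2_{1-\alpha}(1)\big)\le\alpha$ --- valid once $\alpha$ is below an absolute constant, hence for all conventional significance levels --- then yields $\limsup_{n}\mathrm{P}_{H_0}(\text{reject})\le\alpha$, so the test is asymptotically of level $\alpha$. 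For consistency: under any alternative $X_j\not\indep X_{j'}$, the strong law for the degree-$2$ $V$-statistics $\hat\dcov^2$ and $\bar{D}_{\cdot,\cdot,j}$ gives $\hat\dcov^2(X_j,X_{j'})\to\dcov^2(X_j,X_{j'})$ a.s.\ and $\bar{D}_{\cdot,\cdot,j}\bar{D}_{\cdot,\cdot,j'}\to S_2$ a.s.; here $\dcov^2(X_j,X_{j'})>0$ precisely because $X_j\not\indep X_{j'}$, and $0<S_2<\infty$ (a degenerate coordinate would be independent of the other, contradicting the alternative), so the test statistic, which carries a positive power of $n$, diverges a.s.\ and $\mathrm{P}(\text{reject})\to 1$. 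Combining the two, the test has asymptotic level at most $\alpha$ under $H_0$ and power tending to $1$ against every dependent alternative with finite first moments, which is exactly the assertion.

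The argument is essentially a change of notation: all of the genuinely analytic content --- identifying the limiting null law as a unit-mean quadratic form in Gaussians, and the tail bound that licenses comparing it against the $\chi^2_1$ quantile rather than the unknown quantile of $Q$ --- is imported from \citet{Szekely_2007}. I expect the only delicate point to be the bookkeeping in the first step: correctly tracking the double-centering, the standardization by the \emph{mean} pairwise distance $\bar{D}_{\cdot,\cdot,j}$ (as opposed to a distance standard deviation), and the factor of $1/n$ hidden inside $\T(\alpha)$, so that the quantity in the rejection condition matches, up to this normalization, the statistic whose weak limit has mean $1$; only under that precise normalization is $\chi^2_{1-\alpha}(1)$ the correct threshold.
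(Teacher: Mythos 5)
Your proposal is correct and takes essentially the same route as the paper: the paper's own proof is a one-line appeal to the asymptotic theory of Sz\'ekely and Rizzo (Theorem 5 and Corollary 2 of the 2009 reference) combined with the observation that \(\varphi\) is defined as the standardized distance-covariance statistic minus the scaled critical value, which is exactly the reduction you carry out. Your write-up simply makes explicit the bookkeeping (double-centering, standardization by the mean pairwise distances, the factor \(\tfrac{1}{n}\) inside \(\T(\alpha)\)) and the imported null-distribution, tail-bound, and consistency facts that the paper leaves entirely to the citation.
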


\begin{proof}
  This follows 
  from \citep[Theorem 5 and Corollary 2]{Sz_kely_2009} and how \(\varphi\) is defined to correspond to the difference between distance covariance 
  and critical values.
\end{proof}

These differences between \(\hat\Delta_\mathbf{X}\) and \(\varphi\) serve two important purposes:
first, they ensure \(\varphi\) maps to a reproducing kernel Hilbert space so that our Definition \ref{defn:kernel} is a corresponding kernel function \citep{scholkopf2001generalized};
and second, as the name ``dependence contribution map'' suggests, they ensure \(\phi[S_{i, \cdot}]\) is informative not just about distance covariance but about nonlinear dependence and about how the inclusion of sample \(S_{i, \cdot}\) in a set of samples \(S\)  contributes to the dependence patterns estimated from \(S\)---
this is the key intuition behind how our kernel function is used to learn structurally homogeneous sample subsets, as explicated in the following sections.

Compared to other general measures of dependence, such as mutual information \citep{shannon1948mathematical,cover1999elements} or ball covariance \citep{pan2019ball}, we chose the distance covariance in particular because it has mathematical properties that are convenient (if not necessary) for defining the kernel:
namely, (1) the pairwise distance correlation matrix can be rewritten in terms of the per-sample covariance (or dependence, if one subtracts a critical value) contribution matrix, which facilitates comparing two samples and thus defining the kernel---this is for example complicated by the logarithms used in defining mutual information;
and (2) the distance covariance under the null hypothesis provably approaches the chi-square distribution in the sample limit, facilitating the direct computation of critical values corresponding to a given significance level (\(T(\alpha)\) in Definition~\ref{defn:map}), whereras estimating critical values in the case of mutual information or ball covariance using permutations would be computationally infeasible.

\subsection{Causal Graphs in Kernel Space}
\label{sec:causal-graphs-kernel}

In general, a full causal structure can only be learned with sufficient data about the effects of interventions, and thus causal structure learning from purely observational data is usually possible only up to an equivalence class of causal graphs \citep{Spirtes:2000,Pearl_2009}. For example, the classic PC and IC algorithms, under the assumptions of no selection bias and no confounding by latent variables, do not necessarily return a fully-specified DAG but instead return a mixed graph, containing possibly directed and undirected edges, representing the Markov equivalence class \citep{Spirtes_1991,Pearl:1995}.

We now define a set of equivalence classes for ancestral graphs (AGs), which---unlike causal DAGs---do not assume the absence of selection bias and latent confounders \citep{richardson2002ancestral}:
\begin{defn}
  Consider an arbitrary ancestral graph \(\mathcal{A}\) with the set of vertices \(V^\mathcal{A}\) and edge function \(E^\mathcal{A}\), and denote the set of unconditional \(m\)-connection statements entailed by their corresponding unique maximal ancestral graph as \(M^\mathcal{A} = \{(j, j') : j \mcon j' \mid \emptyset\} \subseteq V^\mathcal{A} \times V^\mathcal{A}\).
  For any ancestral graph \(\mathcal{A'}\) such that \(V^\mathcal{A'} = V^\mathcal{A}\), define the \textit{unconditional equivalence} relation denoted by `\(\ue\)' as
  \(\mathcal{A} \ue \mathcal{A}' \quad\text{if and only if}\quad M^\mathcal{A} = M^{\mathcal{A}'}.\)
\end{defn}

\begin{lemma}
  This lemma has two parts:
  (i) the relation \(\ue\) is an equivalence relation over the set of ancestral graphs \(\mathbb{A}\);
  (ii) for an arbitrary ancestral graph \(\mathcal{A} \in \mathbb{A}\), the bidirected graph \(\,\mathcal{U}^\mathcal{A} = (V^\mathcal{A}, E^\mathcal{U})\), where \(E^\mathcal{U}\) maps all pairs \((j, j') \in M^\mathcal{A}\) to the bidirected edge symbol `\(\leftrightarrow\)',  is a unique \textit{representative} of the equivalence class \([\mathcal{A}]\).
\end{lemma}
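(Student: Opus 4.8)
The plan is to dispatch (i) by pure set theory and to reduce (ii) to one structural observation about bidirected graphs. For (i): the relation $\ue$ is the pullback of set equality along the assignment $\mathcal{A}\mapsto M^\mathcal{A}$, so restricting (as the definition does) to ancestral graphs over a fixed vertex set $V$, for which the sets $M^\mathcal{A}\subseteq V\times V$ are comparable, reflexivity, symmetry and transitivity of $\ue$ are inherited verbatim from those of $=$ on subsets of $V\times V$; no graph-theoretic input is needed.

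For (ii) I would first prove the following about an arbitrary graph $\mathcal{B}$ all of whose edges are bidirected. First, $\mathcal{B}$ is an ancestral graph: having no directed edges it has no directed cycles and no almost-directed cycles, and the tail condition on the endpoints of undirected edges is vacuous. Second, for distinct $j,j'$ one has $j\mcon j'\mid\emptyset$ in $\mathcal{B}$ if and only if $j\leftrightarrow j'$ is an edge of $\mathcal{B}$: the single edge is an active path, whereas on any path of length at least two every interior vertex has two incident arrowheads and is therefore a collider, and with an empty conditioning set every collider blocks its path, so no longer path is active. In particular every non-adjacent pair of $\mathcal{B}$ is m-separated already by $\emptyset$, so $\mathcal{B}$ is maximal and hence is its own unique MAG; consequently $M^{\mathcal{B}}$ is exactly the adjacency relation of $\mathcal{B}$.

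Granting this, (ii) is short. Since $M^\mathcal{A}$ is symmetric and contains no diagonal pair, $\mathcal{U}^\mathcal{A}$ is a well-defined bidirected graph, hence an ancestral graph, and by the observation its set of unconditional m-connections equals its adjacency relation, namely $M^\mathcal{A}$; thus $M^{\mathcal{U}^\mathcal{A}}=M^\mathcal{A}$, i.e.\ $\mathcal{U}^\mathcal{A}\ue\mathcal{A}$, so $\mathcal{U}^\mathcal{A}\in[\mathcal{A}]$. For uniqueness, any bidirected graph $\mathcal{B}$ with $\mathcal{B}\ue\mathcal{A}$ has $V^\mathcal{B}=V^\mathcal{A}$ and, by the observation, edge set exactly $\{\,j\leftrightarrow j':(j,j')\in M^\mathcal{B}=M^\mathcal{A}\,\}$, whence $\mathcal{B}=\mathcal{U}^\mathcal{A}$; equivalently, $\mathcal{A}\mapsto\mathcal{U}^\mathcal{A}$ is constant on $\ue$-classes and separates them, since $\mathcal{U}^\mathcal{A}=\mathcal{U}^{\mathcal{A}'}$ forces $\mathcal{A}\ue\mathcal{U}^\mathcal{A}=\mathcal{U}^{\mathcal{A}'}\ue\mathcal{A}'$ by transitivity.

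I expect the real obstacle to be bookkeeping rather than depth. One must keep in mind that $M^\mathcal{A}$ is, by definition, read off the \emph{maximal} ancestral graph of $\mathcal{A}$, so I would invoke the standard fact \citep{richardson2002ancestral} that an ancestral graph and its unique MAG entail the same m-separations; this is what makes $M^\mathcal{A}$ a genuine invariant of $\mathcal{A}$ and lets $\mathcal{U}^\mathcal{A}$ (shown above to coincide with its own MAG) be compared with it directly. The one slightly delicate point is the collider argument: it is exactly because a bidirected graph has no directed edges that an empty conditioning set already blocks every path of length at least two, which is what pins the representative down to the ``direct unconditional m-connection'' graph.
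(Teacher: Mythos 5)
Your proof is correct and follows essentially the same route as the paper: part (i) via the correspondence of \(\ue\) with set equality, and part (ii) via the key observation that a purely bidirected graph is ancestral, maximal, and has unconditional \(m\)-connections exactly at its adjacencies, so that \(\mathcal{U}^\mathcal{A} \ue \mathcal{A}\) and the assignment \(\mathcal{A} \mapsto \mathcal{U}^\mathcal{A}\) is well-defined, class-separating, and unique among bidirected graphs in each class. The only difference is one of detail: you explicitly supply the collider/blocking argument and the fact that an ancestral graph and its MAG share the same \(m\)-separations, steps the paper asserts without proof, so this is a fleshed-out version of the same argument rather than a different approach.
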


\begin{proof}
  For (i), recall that an equivalence relation is any relation satisfying reflexivity, symmetry, and transitivity \citep{devlin2003sets}, all of which are satisfied by \(\ue\) because of its correspondence to the relation `\(=\)' between sets.
  Thus, to prove (ii), it suffices to show that the map \(s: \mathbb{A} / {\ue} \rightarrow \mathbb{A}, [\mathcal{A}] \mapsto \mathcal{U}^A\) is injective (i.e, that it is a \textit{section}) and that \([s([\mathcal{A}])] = [A]\) \citep{mac2013categories}.
  The key to the proof is the observation that \(\,\mathcal{U}^\mathcal{A}\), because it contains only bidirected edges, is maximal and therefore entails exactly the unconditional \(m\)-separation statements \(M^\mathcal{A}\), thus by (i) we have \(\,\mathcal{U}^\mathcal{A} \ue \mathcal{A}\) or equivalently \(\,\mathcal{U}^\mathcal{A} \in [\mathcal{A}]\) or equivalently \(\,[\mathcal{U}^\mathcal{A}] = [\mathcal{A}]\).
  Let \(\mathcal{A}, \mathcal{A}'\) be arbitrary AGs, and assume \(s([\mathcal{A}]) = s([\mathcal{A}'])\).
  Then by definition of \(s\) we have \(\,\mathcal{U}^\mathcal{A} = \mathcal{U}^{\mathcal{A}'}\), and by the observation above, \(\,\mathcal{U}^\mathcal{A} \in [\mathcal{A}']\) and thus \([\mathcal{A}] = [\mathcal{A}']\), making \(s\) injective.
  And finally, by the definition of \(s\) and also by the observation above, \([s([\mathcal{A}])] = [\mathcal{U}^A] = [A]\), completing the proof.
\end{proof}

This equivalence relation and its representatives has some important but perhaps subtle properties.
First, it is different from Markov equivalence over AGs (which is characterized by partial ancestral graphs, PAGs) \citep{Zhang2007ACO}---it uses only unconditional \(m\)-separation while PAGs are learned from conditional \(m\)-separation statements.
Second, because all DAGs are AGs, \(\ue\) is also an equivalence relation over DAGs.
Third, being a representative means that every equivalence class includes exactly one fully bidirected graph (along with other equivalent AGs).
Fourth, because each representative is formed by considering \(m\)-connected paths, \(\,\mathcal{U}^\mathcal{A}\) is not equivalent to what would be generated by some ``edge-wise'' procedure, such as simply replacing every edge in a PAG/AG/DAG/Markov random field/moralized DAG with bidirected edges. 
Finally, its most important property is that it facilitates Theorem \ref{thrm:iso}, for which we first need a few more definitions.

\begin{defn}
  Given arbitrary ancestral graphs \(\mathcal{A}, \mathcal{A}' \in \mathbb{A}\) over the same set of vertices, define the \textit{Hamming similarity product}, denoted `\(\bullet\)', as 
  \(\bullet: \mathbb{A} \times \mathbb{A} \rightarrow \mathbb{A} \quad\text{and}\quad \mathcal{A} \bullet \mathcal{A}' \mapsto \mathcal{H},\) 
  where \(\mathcal{H} = (V^\mathcal{A}, E^\mathcal{H})\) and the function \(E^\mathcal{H}(j, j') = \mlq{\leftrightarrow}\mrq\) if and only if \(E^\mathcal{A}(j, j') = E^{\mathcal{A}'}(j, j')\).
\end{defn}

In words, the Hamming similarity product between two ancestral graphs returns a fully bidirected graph, with edges only where the two graphs have the same edge type.
Now, shifting from ancestral graphs to real-valued square matrices:
\begin{defn}
  Let `\(\oe\)' denote the \textit{orthant equivalence} relation (`orthant' is the generalization of `quadrant' from \(\mathbb{R}^2\) to arbitrarily higher dimensions) in square real matrices, i.e., for matrices \(Y, Y' \in \mathbb{R}^{m,m}\) and with the element-wise function \(\mathrm{sign}(Y)_{j,j'} = \begin{cases}
    1,& \text{if } Y_{j, j'} > 0 \text{ or } j=j'\\ 
    -1,& \text{otherwise} 
  \end{cases}
  \),
  \(Y \oe Y' \quad\text{if and only if}\quad \mathrm{sign}(Y)_{j,j'} = \mathrm{sign}(Y')_{j,j'} \text{ for all } j,j'.\)
\end{defn}

\begin{theorem}\label{thrm:iso}
  Let \(a\) be the map from the set of unconditional equivalence classes over ancestral graphs with \(m\) vertices, \(\mathbb{A}^m/{\ue} = \mathbb{U}^{m}\), to the set of orthant equivalence classes over the image of \(\varphi\), i.e., \(m\times m\) symmetric real matrices with positive diagonal entries, \(\phi[\mathbb{R}^{m}]/{\oe} = \mathbb{O}^m\), defined by \(a: \mathcal{U} \mapsto O\), where \(O_{j,j'} =
  \begin{cases}
    1,& \text{if } E^\mathcal{U}(j, j') = \mlq{\leftrightarrow}\mrq \text{ or } j=j'\\
    -1,& \text{otherwise}
  \end{cases}.
  \)
  Then \(a\) is a group isomorphism between \((\mathbb{U}^m, \bullet)\) and \((\mathbb{O}^m, \odot)\), where `\(\odot\)' denotes the element-wise product.
\end{theorem}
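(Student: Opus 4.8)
The plan is to reduce both sides to canonical representatives, recognise each as an elementary abelian $2$-group of rank $\binom{m}{2}$, and observe that $a$ is literally the identification between the two coordinatisations. For the domain, the preceding lemma lets me identify $\mathbb{U}^m$ with the set $\mathcal{B}$ of fully bidirected graphs on the fixed vertex set $\{1,\dots,m\}$ (send a class to its unique bidirected representative $\mathcal{U}$; this is a bijection since each $\mathcal{U}\in\mathcal{B}$ represents its own class), and $\bullet$ descends to the operation on $\mathcal{B}$ obtained by applying the Hamming similarity product to these representatives. Encoding a graph in $\mathcal{B}$ by its edge-indicator vector in $\{0,1\}^{\binom{m}{2}}$, this operation acts coordinatewise as $(x,y)\mapsto\mathbbm{1}[x=y]$, i.e.\ as $1+x+y$ over $\mathbb{F}_2$; from this description it is immediate that $(\mathcal{B},\bullet)$ is an abelian group in which the complete bidirected graph is the identity and every element is its own inverse --- in particular associativity, which is not transparent from the graph-theoretic phrasing, follows at once.

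Next I would treat the codomain analogously. Since the $\oe$-class of a symmetric matrix with positive diagonal is determined by its off-diagonal sign pattern, I identify $\mathbb{O}^m$ with the set $\mathcal{S}$ of symmetric $\{\pm1\}$-matrices with unit diagonal (send an orthant class to its sign matrix), on which $\odot$ is just the element-wise product: this is the standard $(\mathbb{Z}/2\mathbb{Z})^{\binom{m}{2}}$ structure, with the all-ones matrix as identity and every element self-inverse. Two points deserve a line of justification here, and I expect them to be the only non-mechanical part of the argument: (i) that $\odot$ is well defined on orthant classes, which holds because $\mathrm{sign}$ is multiplicative on nonzero reals --- one should record, consistently with the sign convention used to define $\oe$, that the relevant matrices in $\phi[\mathbb{R}^m]$ have no zero entries; and (ii) that $\mathrm{sign}$ maps $\phi[\mathbb{R}^m]$ onto all of $\mathcal{S}$, so that $\mathbb{O}^m$ really is in bijection with $\mathcal{S}$ (and, incidentally, is closed under $\odot$ and contains the identity) --- this is where one must actually use the form of $\varphi$, arguing that each off-diagonal sign can be prescribed independently by a suitable choice of data. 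I regard (ii) as the genuine obstacle; the rest is bookkeeping.

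Finally I would read $a$ through these identifications: it is the bijection $\mathcal{B}\to\mathcal{S}$ placing $+1$ at $(j,j')$ exactly when $\mathcal{U}$ has a bidirected edge there (and on the diagonal) and $-1$ elsewhere, with the evident inverse, hence a bijection. It remains to check the homomorphism identity $a(\mathcal{U}\bullet\mathcal{U}')=a(\mathcal{U})\odot a(\mathcal{U}')$ entrywise: on the diagonal both sides equal $1$, and for $j\neq j'$ the graph $\mathcal{U}\bullet\mathcal{U}'$ carries a bidirected edge at $(j,j')$ iff $E^\mathcal{U}(j,j')=E^{\mathcal{U}'}(j,j')$ iff $a(\mathcal{U})_{j,j'}=a(\mathcal{U}')_{j,j'}$ iff $a(\mathcal{U})_{j,j'}\,a(\mathcal{U}')_{j,j'}=1$, so $a(\mathcal{U}\bullet\mathcal{U}')_{j,j'}$ and $(a(\mathcal{U})\odot a(\mathcal{U}'))_{j,j'}$ agree (both $1$ in that case, both $-1$ otherwise). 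This gives the isomorphism. The substantive content has already been supplied by the earlier lemma's uniqueness of the bidirected representative; everything above is the encoding of both structures as $\mathbb{F}_2^{\binom{m}{2}}$, the short case check, and the surjectivity remark of the previous paragraph.
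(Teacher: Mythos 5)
Your proposal is correct and, at its core, follows the same route as the paper: pass to canonical representatives (the unique fully bidirected graph on one side, the sign pattern on the other), check bijectivity, and verify the homomorphism identity by the same entrywise case analysis (\(+1\) exactly when the two edge functions agree). Your repackaging of both sides as \(\mathbb{F}_2^{\binom{m}{2}}\) is a cleaner way to see the group axioms---the paper simply asserts associativity and self-inverseness---but it is a reformulation rather than a different argument. Two remarks on the points you flag. Your obstacle (ii) is not actually left for you to prove: the theorem statement itself identifies \(\varphi(\mathbb{R}^m)\) with the symmetric matrices with positive diagonal (an identification the paper never verifies either), and under that reading every sign pattern in your \(\mathcal{S}\) is realized (e.g.\ by the \(\pm 1\) pattern matrix itself), so the bijection \(\mathbb{O}^m \cong \mathcal{S}\) is immediate; the paper's own proof does no more than this, disposing of surjectivity with a ``notice that''. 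Your point (i) is a genuine subtlety the paper glosses over: with the stated sign convention (zeros lumped with negatives), \(\odot\) is not well defined on orthant classes if representatives may have vanishing entries, and the clean fix is exactly the one you suggest---work with the induced operation on the canonical \(\pm 1\) representatives, or record that the relevant matrices have no zero off-diagonal entries. With those readings in place your argument is complete and matches the paper's in substance, while being more careful about well-definedness.
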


\begin{proof}
  First, note that \((\mathbb{U}^m, \bullet)\) is indeed a group, satisfying the three group axioms \citep{artin2011algebra}:
  the representative of its identity element is the fully connected bidirected graph over \(m\) vertices, \(\mathcal{U}^\mathbbm{1}\);
  each element is its own inverse;
  and \(\bullet\) is associative.
  Likewise, \((\mathbb{O}^m, \odot)\) is a group with identity element \([\mathbbm{1}^{m,m}]\), each element its own inverse, and the associative element-wise product operator.
  
  Now, to show the two groups are isomorphic, it suffices to show (i) that \(a\) is bijective and (ii) that for arbitrary \(\mathcal{U}, \mathcal{U}' \in \mathbb{U}^m\), \(a(\mathcal{U}) \odot a(\mathcal{U}') = a(\mathcal{U} \bullet \mathcal{U}')\).  
  For (i) notice that if \(U \not= U'\), then there must be at least one pair of vertices \(j, j'\) such that \(E^\mathcal{U}(j,j') \not= E^\mathcal{U'}(j,j')\) and thus clearly \(O_{j,j'} \not= O'_{j,j'}\), so \(a\) in injective.
  Furthermore, notice that every distinct \(O \in \mathbb{O}^m\) is the image of some graph \(\mathcal{U}\), so \(a\) is also surjective.
  For (ii), for every \(j,j' \in \{1, \ldots, m\}\), the definitions of \(a\), \(\odot\), and \(\bullet\) ensure \(a(\mathcal{U})_{j,j'} \odot a(\mathcal{U}')_{j,j'} = 1 \iff E^\mathcal{U}(j,j') = E^\mathcal{U'}(j,j') \iff 1 = a(\mathcal{U} \bullet \mathcal{U}')\), completing the proof.
\end{proof}


For causal inference, which (often, but not necessarily) amounts to taking several samples in real space and inferring a single corresponding member in the space of ancestral graphs (or, more often, its quotient set by some equivalence relation), Theorem \ref{thrm:iso} means we can compare the different graphs of different sample sets without having to first move to the ancestral graph space.

Finally, notice the space of real square matrices is not a typical sample space but rather precisely (a superspace of) the space that our dependence contribution map \(\varphi\) (Definition \ref{defn:map}) maps samples to---this means that mapping samples with \(\varphi\) allows us to make use of the group isomorphism.
Though this already provides an intuition for why using \(\varphi\) would help with causal clustering, explicitly mapping each sample with it would be unnecessarily computationally expensive, and we are ultimately interested in morphisms between \textit{metric spaces} (not just groups) of samples and graphs.
To address this, we therefore now move on to defining a kernel for \(\varphi\).

\subsection{The Dependence Contribution Kernel}
\label{sec:depend-contr-kern}

\begin{defn}\label{defn:kernel}
  Let \(S, Z, \T,\) and \(\varphi\) be as in Definition \ref{defn:map}.
  We define the \emph{dependence contribution kernel} using the Frobenius (denoted by the subscript \(_\mathrm{F}\)) inner product and norm:
  \(\k[S_{i, \cdot}, S_{i', \cdot}] = \frac{\frob{\phi[S_{i, \cdot}]}{\phi[S_{i', \cdot}]}}{\norm{ \phi[S_{i, \cdot}] }_\mathrm{F} \norm{ \phi[S_{i', \cdot}] }_\mathrm{F}}.\) 
  A more convenient expression for applying the kernel to a data set is obtained by first defining a helper kernel, \(\gamma\), using it along with \(\mathrm{vec}\) from Definition \ref{defn:dcov}:
 \begin{align*}
  \gamma(S_{i, \cdot}, S_{i', \cdot}) &= \frob{\phi[S_{i, \cdot}]}{\phi[S_{i', \cdot}]}\\
   &= \big((\vec{Z_{i, \cdot}}^\top \vec{Z_{i', \cdot}}\big)^2 - Z_{i, \cdot}\T Z_{i, \cdot}^\top - Z_{i', \cdot} \T Z_{i', \cdot}^\top + \norm{ \T }^2_2
  \end{align*}
  This allows us to write
 \[
   \k[s,s'] =   \frac{\gamma(S_{i, \cdot}, S_{i', \cdot})}{\gamma(S_{i, \cdot}, S_{i, \cdot})^\frac{1}{2} \gamma(S_{i', \cdot}, S_{i', \cdot})^\frac{1}{2}}
 \]
 Finally, note that \(\kappa\) can be readily implemented on an entire set of samples, returning an entire Gram (kernel) matrix instead of a scalar value, by replacing the matrix operations above with tensor operations and specifying the correct axes along which summation occurs---an open source Python implementation can be found at 
 \url{https://causal.dev/code/dep_con_kernel.py}.
\end{defn}

A proper distance metric can also be obtained from this kernel through function composition: \(\arccos \mathop{\circ} \kappa\).
The key idea behind the kernel is that it is the cosine similarity in the space that \(\varphi\) maps to, meaning for arbitrary sample points \(x, x'\) it evaluates to \(\cos(\theta)\), where \(\theta\) is the angle between \(\varphi(x)\) and \(\varphi(x')\).
In this space, \(\theta\) represents the dissimilarity of the \textit{dependence patterns} underlying \(x\) and \(x'\), without being biased by the possibly different magnitudes of \(\phi[x]\) and \(\phi[x']\) due to differing \textit{variances}.
Indeed, it can be used as a statistical test of whether samples come from different dependence structures and therefore causal models:

\begin{theorem}\label{thrm:zero}
  Let \(S\in \mathbb{R}^{n, m}\), \(S'\in \mathbb{R}^{n', m}\) be sets of \(n, n'\) iid samples drawn respectively from the random variables  \(X = (X_1, \ldots, X_m)\) and \(X' = (X'_1, \ldots, X'_m)\) with finite first moments.
  Then,
  \(\sum_{i=1}^n\sum_{i'=1}^{n'} \k[S_{i,\cdot},S'_{i', \cdot}] < 0 \implies  \exists j,j' \in \{1, \ldots, m\} \text{ such that } \mathcal{I}(X_j, \emptyset, X_{j'}) \not= \mathcal{I}(X'_j, \emptyset, X'_{j'}).\) 
\end{theorem}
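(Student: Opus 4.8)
The plan is to reduce the hypothesis \(\sum_{i,i'}\k[S_{i,\cdot},S'_{i',\cdot}]<0\) to an entrywise statement about sign patterns and then invoke Lemma~\ref{lemma:phi}. First I would use bilinearity of the Frobenius inner product, together with the fact that the normalizers \(\norm{\varphi(S_{i,\cdot})}_{\mathrm F}\) and \(\norm{\varphi(S'_{i',\cdot})}_{\mathrm F}\) are positive scalars, to factor the double sum as \(\sum_{i=1}^{n}\sum_{i'=1}^{n'}\k[S_{i,\cdot},S'_{i',\cdot}]=\frob{\Phi}{\Phi'}\), where \(\Phi\defeq\sum_{i=1}^{n}\varphi(S_{i,\cdot})/\norm{\varphi(S_{i,\cdot})}_{\mathrm F}\) and \(\Phi'\defeq\sum_{i'=1}^{n'}\varphi(S'_{i',\cdot})/\norm{\varphi(S'_{i',\cdot})}_{\mathrm F}\) are the sums of the unit-normalized feature maps. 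Expanding entrywise, \(\frob{\Phi}{\Phi'}=\sum_{j,j'}\Phi_{j,j'}\Phi'_{j,j'}\). The diagonal terms are never the culprit: by Definition~\ref{defn:map}, \(\big(\varphi(S_{i,\cdot})\big)_{j,j}=\norm{Z_{i,\cdot,j}}_2^2\ge 0\) because \(\T(\alpha)\) has a zero diagonal, so dividing by positive norms and summing over \(i\) gives \(\Phi_{j,j}\ge 0\) and likewise \(\Phi'_{j,j}\ge 0\); hence every diagonal term of \(\sum_{j,j'}\Phi_{j,j'}\Phi'_{j,j'}\) is nonnegative. Therefore, if the total is negative there must be an off-diagonal pair \(j\ne j'\) with \(\Phi_{j,j'}\Phi'_{j,j'}<0\), i.e.\ \(\mathrm{sign}(\Phi_{j,j'})\) and \(\mathrm{sign}(\Phi'_{j,j'})\) opposite and both nonzero.

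Next I would connect the sign of an off-diagonal entry of \(\Phi\) to the test of Lemma~\ref{lemma:phi}. The key auxiliary claim is that \(\mathrm{sign}(\Phi_{j,j'})\) agrees with \(\mathrm{sign}\big((\sum_{i}\varphi(S_{i,\cdot}))_{j,j'}\big)\) (and similarly for \(S'\)), so that Lemma~\ref{lemma:phi} applies verbatim: it states that the latter sign realizes a statistically consistent test of \(X_j\indep X_{j'}\), being positive exactly when the test rejects. Granting the auxiliary claim, the opposite signs force the test to reject for exactly one of the two populations, so by consistency exactly one of \(X_j\indep X_{j'}\) and \(X'_j\indep X'_{j'}\) holds (in the sample limit); equivalently \(\mathcal{I}(X_j,\emptyset,X_{j'})\ne\mathcal{I}(X'_j,\emptyset,X'_{j'})\), which is the conclusion of Theorem~\ref{thrm:zero}.

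The \emph{main obstacle} is the auxiliary sign-transfer claim: passing from the raw aggregate \(\sum_{i}\varphi(S_{i,\cdot})\) (the test statistic of Lemma~\ref{lemma:phi}) to the \(\ell^2\)-normalized aggregate \(\Phi\) reweights each summand by a distinct positive weight, and reweighting can in principle flip the sign of an individual entry, so an asymptotic argument is needed rather than a purely algebraic one. I would handle it by passing to the population limit: as \(n\to\infty\) the doubly-centered, standardized distances \(Z_{i,\cdot,\cdot}\) converge to their population analogues, each normalized summand \(\varphi(S_{i,\cdot})/\norm{\varphi(S_{i,\cdot})}_{\mathrm F}\) concentrates around a common population feature map whose \((j,j')\) entry is a strictly positive multiple of \(\dcov^2(X_j,X_{j'})\) minus a vanishing multiple of the critical value \(\chi^2_{1-\alpha}(1)\); hence \(\Phi_{j,j'}\) is eventually positive when \(\dcov^2(X_j,X_{j'})>0\) (equivalently \(X_j\not\indep X_{j'}\)) and is dominated by the negative scaled critical term otherwise, matching the sign of \(\big(\sum_{i}\varphi(S_{i,\cdot})\big)_{j,j'}\) and thus the consistent test. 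A non-asymptotic variant would instead sandwich the weighted sum between positive multiples of the unweighted one via \(\min_i\norm{\varphi(S_{i,\cdot})}_{\mathrm F}^{-1}\) and \(\max_i\norm{\varphi(S_{i,\cdot})}_{\mathrm F}^{-1}\), which are of the same order. The only other point needing care is the bookkeeping of the one-sided error level \(\alpha\) under the null, so that ``the test does not reject'' genuinely corresponds to \(X_j\indep X_{j'}\).
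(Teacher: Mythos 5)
Your proposal is correct and follows essentially the same route as the paper: bilinearity of the Frobenius inner product reduces negativity of the aggregated kernel sum to an off-diagonal sign discrepancy between the dependence-contribution entries of the two sample sets, which the consistency of the test in Lemma \ref{lemma:phi} then converts into a difference in unconditional (in)dependence. If anything you are more explicit than the paper, which hides the per-sample normalizers behind a ``\(\propto\)'' and delegates the sign-transfer/asymptotic step you isolate as the main obstacle to a brief appeal to Slutsky's theorem and the continuous mapping theorem.
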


\begin{proof}
  Through Slutsky's Theorem \cite[{see}][{Theorem 3.2.7}]{takeshi1985advanced} and the continuous mapping theorem \cite[see][Theorem 2.3]{van2000asymptotic}, the consistency of \(\varphi\) (Lemma \ref{lemma:phi}) guarantees the consistency of \(\kappa\).
  Because the numerator of \(\kappa\) is a Frobenius inner product of \(\varphi\),
  \[\sum_{i=1}^n\sum_{i'=1}^{n'}\k[S_{i,\cdot},S'_{i', \cdot}] \propto \sum_{i=1}^n\sum_{i'=1}^{n'} \sum_{j=1}^m \sum_{j'=1}^m\phi[S_{i,\cdot}]_{j,j'} \phi[S'_{i', \cdot}]_{j,j'}.\]
  Thus, in order for \(\sum_{i,i'}\k[S_{i,\cdot},S'_{i', \cdot}] < 0\), there must be a \(j\) and \(j'\) for which \(\phi[S_{i,\cdot}]_{j,j'} > 0\) but \(\phi[S'_{i',\cdot}]_{j,j'} < 0\) (or vice versa), and thus the hypothesis test in Lemma \ref{lemma:phi} would reject the null hypothesis that \(X_j \indep X_{j'}\) but fail to reject that \(X'_j \indep X'_{j'}\).
\end{proof}

\begin{cor}\label{cor:homo-sub}
  Due to the relationship between independence and causal structure, an immediate result of Theorem \ref{thrm:zero} is that \(\sum_{i,i}\k[S_{i,\cdot},S'_{i', \cdot}] < 0\) implies \(X\) and \(X'\) have different causal structures.
\end{cor}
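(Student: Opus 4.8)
The plan is to reduce Corollary~\ref{cor:homo-sub} to Theorem~\ref{thrm:zero} by a short translation from a distributional discrepancy to a graphical one, using the causal Markov and faithfulness assumptions that are in force throughout the paper. Theorem~\ref{thrm:zero} already supplies the substantive content: if \(\sum_{i,i'}\k[S_{i,\cdot},S'_{i',\cdot}] < 0\), then there exist indices \(j,j'\) with \(\mathcal{I}(X_j,\emptyset,X_{j'}) \neq \mathcal{I}(X'_j,\emptyset,X'_{j'})\), so without loss of generality \(X_j \indep X_{j'}\) while \(X'_j \not\indep X'_{j'}\). Everything that remains is to argue that two distributions differing in an unconditional (in)dependence statement cannot be generated by the same causal structure.

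First I would fix \(\mathcal{A}, \mathcal{A}' \in \mathbb{A}\) to be the ancestral graphs (over the shared vertex set \(\{1,\dots,m\}\)) generating \(X\) and \(X'\). Applying faithfulness to \(X\), the independence \(X_j \indep X_{j'}\) forces \(j\) and \(j'\) to be \(m\)-separated given \(\emptyset\) in \(\mathcal{A}\) (equivalently in its maximal ancestral graph, which entails the same \(m\)-separations), i.e.\ \((j,j') \notin M^{\mathcal{A}}\). Applying the causal Markov condition to \(X'\) in contrapositive form, the dependence \(X'_j \not\indep X'_{j'}\) forces \(j \mcon j' \mid \emptyset\) in \(\mathcal{A}'\), i.e.\ \((j,j') \in M^{\mathcal{A}'}\). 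Hence \(M^{\mathcal{A}} \neq M^{\mathcal{A}'}\), so by the definition of the unconditional equivalence relation \(\ue\) the graphs \(\mathcal{A}\) and \(\mathcal{A}'\) are not unconditionally equivalent; in particular \(\mathcal{A} \neq \mathcal{A}'\), which is the claim. Alternatively one can route the same conclusion through Theorem~\ref{thrm:iso}: the differing entry \((j,j')\) means the orthant-equivalence representatives \(a(\mathcal{U}^{\mathcal{A}})\) and \(a(\mathcal{U}^{\mathcal{A}'})\) disagree there, so the two equivalence classes are distinct.

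The hard part here is not technical but definitional: I would take care to state precisely what ``different causal structures'' is claimed to mean, since observational data cannot resolve graphs within an equivalence class. The cleanest and strongest conclusion the argument actually supports is that \(\mathcal{A}\) and \(\mathcal{A}'\) lie in different classes under \(\ue\)---a coarser partition than Markov equivalence---which a fortiori implies the graphs are not identical. I would also flag explicitly that the implication is one-directional: a nonnegative kernel sum does not certify agreement of all unconditional independence relations, so the corollary is a soundness guarantee for declaring structural heterogeneity rather than a characterization of it.
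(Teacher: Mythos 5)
Your proposal is correct and follows essentially the same route as the paper, which states the corollary as an immediate consequence of Theorem~\ref{thrm:zero} together with the standing causal Markov and faithfulness assumptions; your write-up simply makes that routine translation (faithfulness applied to the independent pair, Markov to the dependent pair, hence \(M^{\mathcal{A}} \neq M^{\mathcal{A}'}\)) explicit. Your closing caveat---that the conclusion is really about distinct \(\ue\)-classes and that the implication is one-directional---is consistent with how the paper itself interprets the result.
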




\begin{theorem}\label{thrm:isometric}
  Let \(d\) be the distance measure between unconditional equivalence classes of ancestral graphs over \(m\) vertices, \(d(\mathcal{U}, \mathcal{U}') = m^2 - |\{ (j,j') : E^{\mathcal{U} \mathop{\bullet} \mathcal{U}'}(j, j') = \mlq{\leftrightarrow}\mrq \}| - m\).
  For given sample sets \(S, S'\) (i.e., real \(n\times m\) matrices), use \(\bar\varphi(S)\) to denote the mean of the sample in kernel space, \(\sum_i \varphi(S_{i,.})\), and say \(S \ke S'\) if and only if \(\bar\varphi(S) \oe \bar\varphi(S')\); denote the corresponding quotient set by this equivalence class as \(\mathbb{R}^{n,m} / \ke = \mathbb{K}^{n,m}\) and a representative from each equivalence class as \(Q \in [S]\). 
  Let \(\delta\) be the distance between sets of samples in \(\mathbb{K}\) defined as \(\delta(Q, Q') = m^2 - \frac{1}{2n^2}\sum_{i,i'}\gamma(Q_{i, \cdot}, Q'_{i, \cdot})\). 
  Let \(b: \mathbb{U}^m \rightarrow \mathbb{K}^{n,m}, b: \mathcal{U} \mapsto \Q\), where \(\Q\) is the unique element in \(\mathbb{K}\) such that \(\mathrm{sign}(\bar\varphi(\Q)) = a(\mathcal{U})\).
  Then \(b\) is a distance-preserving map (i.e., an isometry) from the metric space \((\mathbb{U}^m, d)\) to \((\mathbb{K}^{n,m}, \delta)\).
\end{theorem}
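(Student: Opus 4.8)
\emph{Overall strategy.} I would establish the isometry in three movements: first that the map $b$ is well-defined and bijective; second that $\delta\big(b(\mathcal U),b(\mathcal U')\big)$ reduces to a single Frobenius inner product between $\pm1$ sign matrices; and third that, via the group isomorphism $a$ of Theorem~\ref{thrm:iso}, that inner product equals $d(\mathcal U,\mathcal U')$ on the nose. Only the last movement is more than bookkeeping, and that is where I expect the real work to lie.

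\emph{Well-definedness and bijectivity of $b$.} By the very definition of the quotient $\mathbb{R}^{n,m}/{\ke}=\mathbb{K}^{n,m}$, two sample sets lie in the same class exactly when the sign patterns $\mathrm{sign}(\bar\varphi(\cdot))$ agree, so $[S]\mapsto\mathrm{sign}(\bar\varphi(S))$ is a well-defined \emph{injection} of $\mathbb{K}^{n,m}$ into the symmetric $\pm1$ matrices with positive diagonal, which is precisely $\mathbb{O}^m$. Composing its inverse with the bijection $a:\mathbb{U}^m\to\mathbb{O}^m$ from Theorem~\ref{thrm:iso} yields $b$, which is then a bijection and in particular injective, so $b$ will be an isometry the moment distances are shown to match. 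The one point needing argument here is \emph{surjectivity} of $S\mapsto\mathrm{sign}(\bar\varphi(S))$ onto $\mathbb{O}^m$: for a target pattern $O\in\mathbb{O}^m$ I would exhibit a sample configuration (equivalently, a distribution faithful to the corresponding representative $\mathcal U$) in which each pair $(X_j,X_{j'})$ with $O_{j,j'}=-1$ is probabilistically independent and each pair with $O_{j,j'}=1$ has a sufficiently strong (e.g.\ near-deterministic) relationship; Lemma~\ref{lemma:phi} guarantees $\bar\varphi(\cdot)_{j,j'}$ does not exceed $0$ for the independent pairs, and the strong dependence forces $\bar\varphi(\cdot)_{j,j'}>0$ for the others, so every $O$ is attained.

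\emph{Collapsing $\delta$ to a sign count.} Fix representatives $Q\in b(\mathcal U)$ and $Q'\in b(\mathcal U')$. Using $\gamma(s,s')=\frob{\phi[s]}{\phi[s']}$ from Definition~\ref{defn:kernel} together with bilinearity of the Frobenius product, $\sum_{i,i'}\gamma(Q_{i,\cdot},Q'_{i',\cdot})=\frob{\bar\varphi(Q)}{\bar\varphi(Q')}=\sum_{j,j'}\bar\varphi(Q)_{j,j'}\,\bar\varphi(Q')_{j,j'}$, hence $\delta\big(b(\mathcal U),b(\mathcal U')\big)=m^2-\tfrac{1}{2n^2}\frob{\bar\varphi(Q)}{\bar\varphi(Q')}$. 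I would then pin down magnitudes: the canonical representative of each class in $\mathbb{K}^{n,m}$ can be chosen so that $\bar\varphi(Q)$ is a fixed scalar multiple $c=c(n,m)$ of the $\pm1$ matrix $a(\mathcal U)$ --- this is exactly what makes $\delta$ independent of the choice of representative --- so that $\frob{\bar\varphi(Q)}{\bar\varphi(Q')}=c^2\,\frob{a(\mathcal U)}{a(\mathcal U')}$ and each term $\bar\varphi(Q)_{j,j'}\bar\varphi(Q')_{j,j'}$ carries the sign $a(\mathcal U)_{j,j'}a(\mathcal U')_{j,j'}$. Since $a$ is a group isomorphism onto $(\mathbb{O}^m,\odot)$, $a(\mathcal U)\odot a(\mathcal U')=a(\mathcal U\bullet\mathcal U')$, so the entries $(j,j')$ with $a(\mathcal U)_{j,j'}a(\mathcal U')_{j,j'}=1$ are precisely those with $E^{\mathcal U\bullet\mathcal U'}(j,j')=\mlq{\leftrightarrow}\mrq$; writing $k$ for that count, the remaining $m^2-k$ entries contribute $-1$, so $\frob{a(\mathcal U)}{a(\mathcal U')}=2k-m^2$ and $\delta\big(b(\mathcal U),b(\mathcal U')\big)=m^2-\tfrac{c^2}{2n^2}(2k-m^2)$. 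It then remains to check that with the correct $c(n,m)$ this equals $m^2-k-m=d(\mathcal U,\mathcal U')$, which together with the bijectivity established above completes the proof.

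\emph{Anticipated main obstacle.} The routine-looking final substitution is in fact where I expect the genuine difficulty: for $\delta$ to be well-defined on the quotient $\mathbb{K}^{n,m}$ at all, one must produce canonical representatives whose $\bar\varphi$ has all entries of a single controlled magnitude, and --- given the interplay of the double-centering, the standardization by $\bar D_{\cdot,\cdot,j}$, and the subtraction of $\T(\alpha)$ in Definition~\ref{defn:map} --- it is not obvious that such representatives exist or that the resulting $c(n,m)$ cancels the $\tfrac{1}{2n^2}$ to leave exactly $d$, diagonal correction $-m$ included. If no clean closed form for $c(n,m)$ is available, the fallbacks are to prove only that $b$ is a similarity (an isometry up to a global scale factor), or to \emph{define} the canonical representative by requiring both $\mathrm{sign}(\bar\varphi(Q))=a(\mathcal U)$ and the precise normalization needed, and then verify consistency with Definitions~\ref{defn:map} and~\ref{defn:kernel}.
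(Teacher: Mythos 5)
Your route is essentially the paper's: establish bijectivity of \(b\) through the isomorphism \(a\), use bilinearity of the Frobenius inner product to collapse \(\textstyle\sum_{i,i'}\gamma(Q_{i,\cdot},Q'_{i',\cdot})\) into \(\langle\bar\varphi(Q),\,\bar\varphi(Q')\rangle_{\mathrm{F}}\), and then identify that inner product with a sign count via \(a(\mathcal U)\odot a(\mathcal U')=a(\mathcal U\bullet\mathcal U')\). The only structural difference is that the paper spends roughly half of its proof verifying that \(d\) and \(\delta\) satisfy the metric axioms, which you omit; that part is routine and your omission is minor.

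The obstacle you flag at the end is real, and it is exactly the step the paper's proof passes over: the representative \(Q\) is pinned down only by \(\mathrm{sign}(\bar\varphi(Q))=a(\mathcal U)\), so \(\delta\) is not automatically well defined on the quotient, and the paper's move of ``substituting \(O,O'\) with their corresponding'' representatives tacitly treats \(\bar\varphi(Q)\) as a fixed multiple of the \(\pm1\) matrix rather than merely sign-matching --- precisely the canonical-normalization assumption you say must be imposed; your fallback of building that normalization into the choice of representative is in effect what the paper does implicitly. Two bookkeeping corrections if you push this through. First, the diagonal of \(a(\mathcal U)\odot a(\mathcal U')\) is always \(+1\), so the positive entries number \(k+m\), not \(k\) (this is where the \(-m\) in \(d\) comes from), giving \(\langle a(\mathcal U),a(\mathcal U')\rangle_{\mathrm{F}}=2(k+m)-m^2\). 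Second, with \(\bar\varphi(Q)=c\,a(\mathcal U)\), matching \(\delta\) to \(d=m^2-(k+m)\) forces \(c^2/n^2=1\) to fix the coefficient of \(k+m\), after which a residual constant of order \(m^2/2\) remains; the same mismatch is present in the paper, whose proof asserts \(d=m^2-\langle O,O'\rangle_{\mathrm{F}}\) (correct only after redistributing a factor of two) and uses \(1/n^2\) where the statement of \(\delta\) has \(1/(2n^2)\). So your instinct located the genuine content of the theorem: an exact normalization-and-constants check on the canonical representatives. Your proposal leaves it open, but the paper's own proof does not supply it either; as the definitions stand, what one can honestly conclude along this route is that \(b\) preserves distances up to an affine rescaling, with exact isometry requiring an adjustment of the representatives or of the constants in \(d\) and \(\delta\).
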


\begin{proof}
  Notice that \((\mathbb{U}^m, d)\) is indeed a metric space \cite[Ch.~2]{choudhary1993elements}:
  \(d(\mathcal{U}, \mathcal{U'}) = 0\) iff \(\mathcal{U}^{-1} \mathop{\bullet} \mathcal{U}'\) is the empty graph, which happens iff \(\mathcal{U} = \mathcal{U}'\);
  the symmetry of \(d\) follows from the symmetry of \(\mathop{\bullet}\);
  and for subadditivity of \(d\), observe that for vertices \(j, j' 
  \) in arbitrary 2-vertex graphs \(\mathcal{U}, \mathcal{U}', \mathcal{U}"\) we have either \(d(\mathcal{U}, \mathcal{U}") = 2\), in which case \(d(\mathcal{U}, \mathcal{U}') + d(\mathcal{U}', \mathcal{U}") = 4\), or we have \(d(\mathcal{U}, \mathcal{U}") = 0\), in which case \(d(\mathcal{U}, \mathcal{U}') + d(\mathcal{U}', \mathcal{U}")\) is either 0 or 4---in both cases \(d(\mathcal{U}, \mathcal{U}") \leq d(\mathcal{U}, \mathcal{U}') + d(\mathcal{U}', \mathcal{U}")\); this easily extends to graphs of arbitrary numbers of vertices.
  Likewise, \((\mathbb{K}^{n,m}, \delta)\) is a metric space:
  \(\delta(Q, Q') = 0 \iff \frac{1}{2n^2}\sum_{i,i'}\gamma(Q_{i, \cdot}, Q'_{i, \cdot}) = m^2 \iff \bar\varphi(Q)_{j,j'} = \bar\varphi(Q)_{j,j'},\) for all \(j,j'\), so iff \(Q = Q'\);
  symmetry and subadditivity of \(\delta\) follow from the symmetry and subadditivity of \(\gamma\).
  
  Finally, to show \(b\) is an isometry, we must show (i) that it is bijective and (ii) that for all \(\mathcal{U}, \mathcal{U}' \in \mathbf{U}^m\), \(d(\mathcal{U}, \mathcal{U}') = \delta(b(\mathcal{U}), b(\mathcal{U}'))\).
  For (i), observe that by the group isomorphism \(a\) and definition of \(b\), we have \(\mathcal{U} \not= \mathcal{U}' \implies a(\mathcal{U}) \not= a(\mathcal{U}') \implies Q \not= Q' \implies b(\mathcal{U}) \not= b(\mathcal{U}')\) and so \(b\) is injective. 
  Also observe that because \(\mathbb{K}\) is exactly the set of representatives of orthant equivalence classes of sample sets in kernel space, then for every \(Q \in \mathbb{K}\), there exists a \(\mathcal{U}\) such that \(b(\mathcal{U}) = Q\), and so \(b\) is surjective.
  For (ii), isomorphism \(a\) and the relation between element-wise product and Frobenius inner product allow us to write \(d(\mathcal{U}, \mathcal{U}') = m^2 -\sum_{j,j'} (O \mathop{\odot} O')_{j,j'} = m^2 - \frob{O}{O'}\).
  Substituting \(O, O'\) with their corresponding \(\Q, \Q'\), and because the Frobenius inner product is a sesquilinear form, we can write \(d(\mathcal{U}, \mathcal{U}') = m^2 -\frac{1}{n^2}\sum_{i, i'}\frob{\phi[\Q_{i, \cdot}]}{\phi[\Q'_{i, \cdot}]}\), which by Definition \ref{defn:kernel} finally gives us that \(d(\mathcal{U}, \mathcal{U}') = \delta(\Q, \Q')\), completing the proof.
\end{proof}

In less formal terms, Theorem \ref{thrm:isometric} shows how the space of unconditional equivalence classes of ancestral graphs corresponds to the space of real matrices, which is a common space for samples to lie in.
More specifically, it shows how the structure defined by distances between graphs is the same as the structure defined by distances between sets of samples and how this sample distance is related to our kernel \(\kappa\).
Note that this is much stronger than Theorem \ref{thrm:zero}: not only can \(\kappa\) tell us that two sets of samples come from different causal models, it gives a measure of just how different the causal models are, in terms of their differing unconditional nonlinear independencies.

To summarize, we began by defining \(\varphi\) (Definition \ref{defn:map}), which maps a given data set into a new higher-dimensional feature space.
This feature space corresponds to a space of causal graphical models, such that samples which are similar in the new feature space must come from similar causal models (Theorem \ref{thrm:iso}).
Our main contribution then is to propose the dependence contribution kernel \(\kappa\) (Definition \ref{defn:kernel}). 
This kernel \(\kappa\) is guaranteed not only to tell us that two sets of samples come from different causal models (Theorem \ref{thrm:zero} and Corollary \ref{cor:homo-sub}) but furthermore exactly how different the causal models are (Theorem \ref{thrm:isometric}), all without the computational expense of explicitly projecting samples or learning causal models.
Thus, \(\kappa\) is well-suited for addressing the causal clustering problem
and ensures that resulting clusters will be structurally homogeneous so that subsequent causal structure learning will be more informative.

\section{Applications}
\label{sec:application}

\subsection{Synthetic Data}
\label{sec:synthetic-data}

Python code for generating this data and our plots is open source and available at 
\url{https://causal.dev/code/depcon-kernel-evaluation.tar.gz}.
One data set consists of 600 samples, 100 each from six random DAGs over 10 variables.
For the linear case, we generated random parameters for a structural equation model corresponding to each DAG.
We then generated 100 of these data sets, and performed clustering on each with kernel \(k\)-means using our dependence contribution kernel as well as with three baseline methods: (1) the radial basis function (RBF) kernel, for its universality, (2) the \(d=2\) polynomial kernel, for its ability to detect patterns in linear correlation between samples, and (3) plain \(k\)-means with no kernel, for its simplicity.
For the nonlinear case, we then repeated this process, except that we first generated three random DAGs, gave them random parameters for corresponding linear structural equal models (SEMs), and then added a copy of each of the three SEMs but having replaced independencies (so an edge parameter of 0) with nonlinear dependencies.

The linear data set gives a best-case scenario for the the \(d=2\) polynomial kernel baseline method (which looks at the linear correlation patterns between samples), while the nonlinear data set gives a worst-case scenario in which nonlinear dependencies have zero linear correlation and thus are harder to detect with the polynomial kernel baseline method.

\begin{figure}
  \centering
  \subfigure{\includegraphics[width=.5\textwidth]{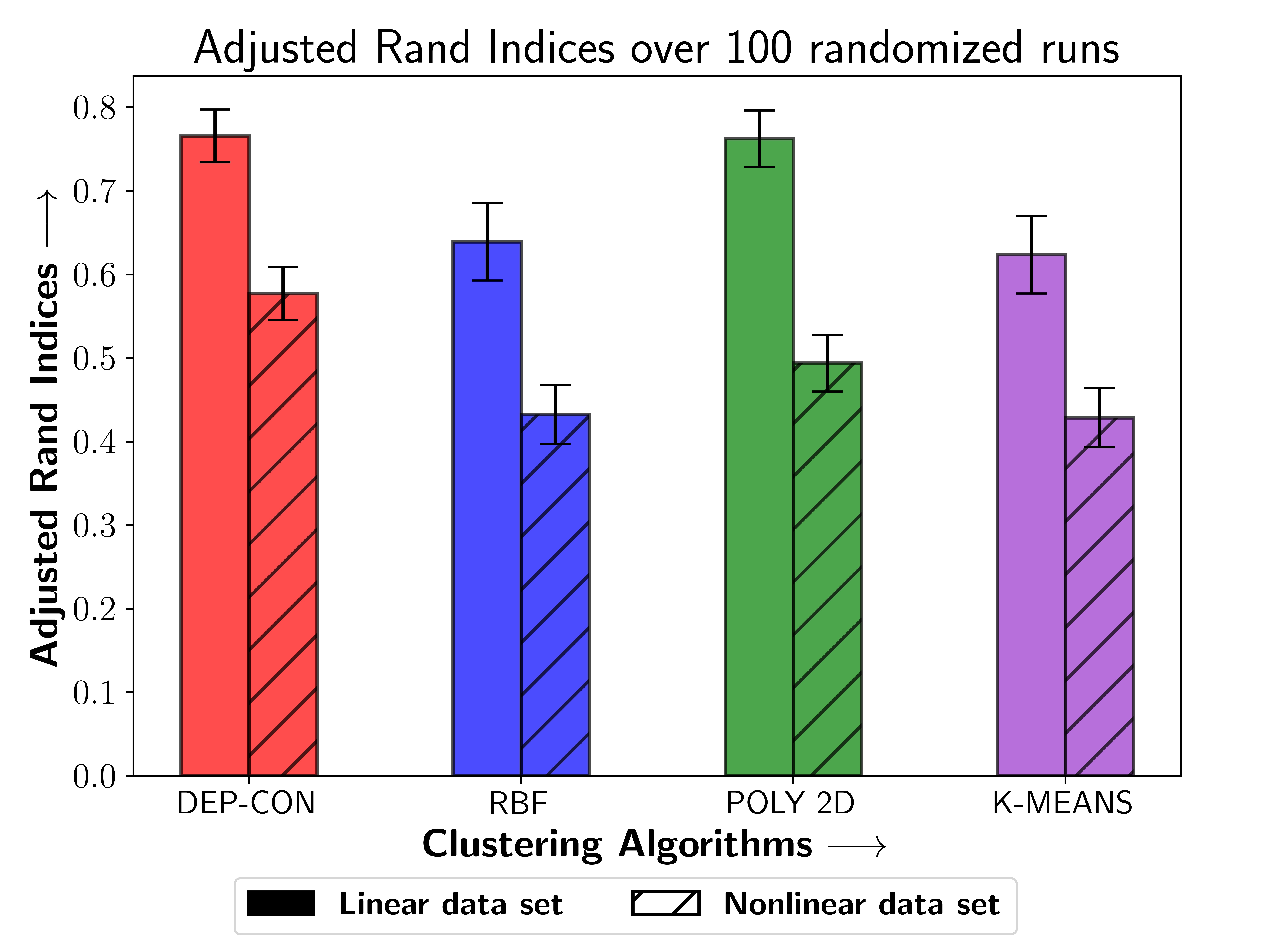}\label{fig:clus:rand}}\hfil%
  \subfigure{\includegraphics[width=.43\textwidth]{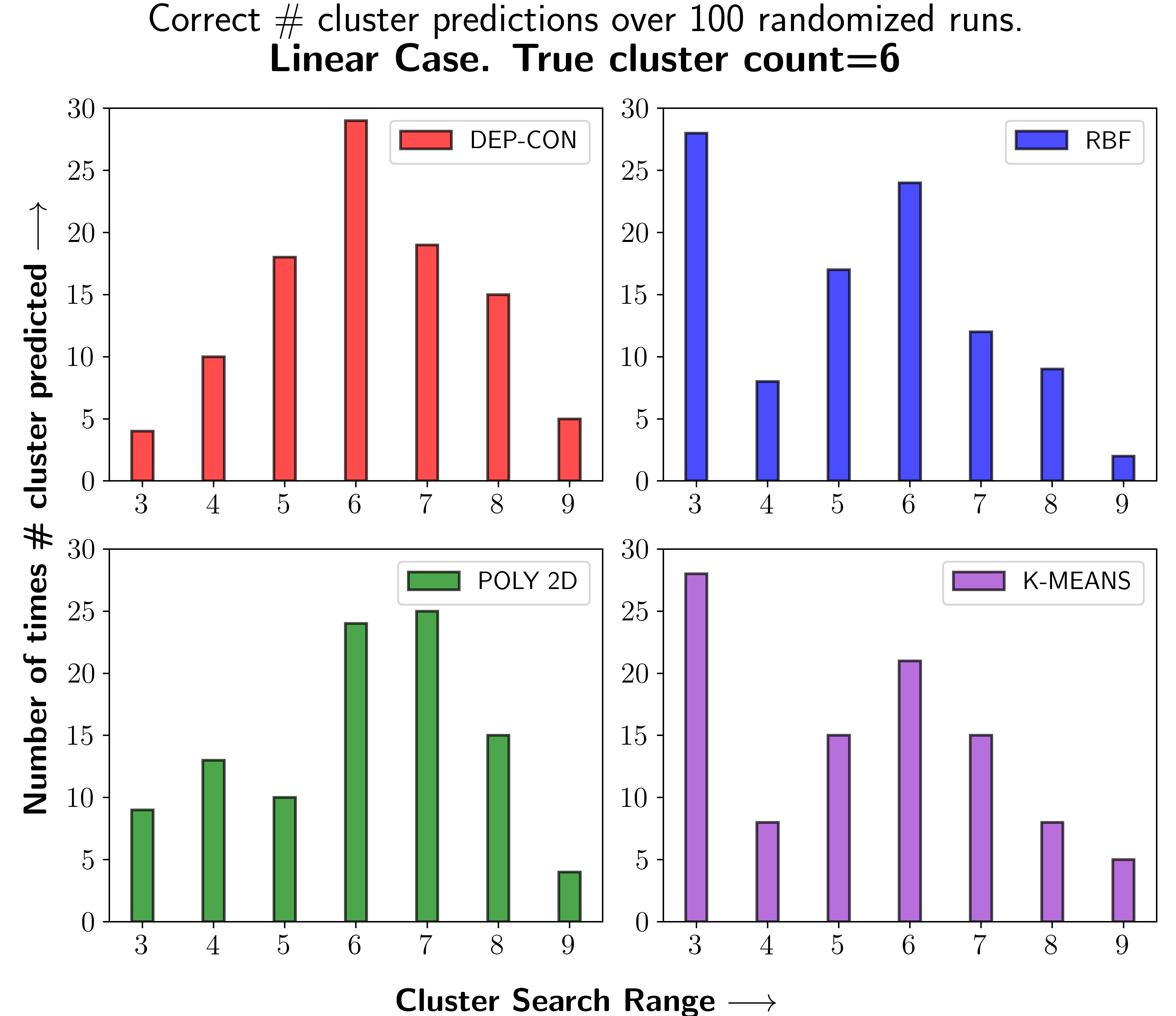}\label{fig:clus:preds}}
  \caption{Comparison of clustering with dependence contribution kernel versus baseline methods.}
  \label{fig:clus}
\end{figure}

For each method on each data set, we chose \(k\) for \(k\)-means by optimizing the Variance Ratio Criterion (VRC) \citep{doi:10.1080/03610927408827101}, so that clustering results realistically reflect how the method would perform when the ground truth is not known.
We evaluated the clusterings in two ways (Figure \ref{fig:clus}):
first, by using the adjusted Rand index \citep{rand1971objective} to compare a given clustering to the ground truth, with 1 being a perfect score and 0 being the score by chance (Figure \ref{fig:clus:rand});
and second, by looking at the predicted number of clusters \(k\) found by optimizing the VRC (Figure \ref{fig:clus:preds}).

The mean adjusted Rand indices show that our kernel performs as well as the best of the baseline methods in the linear case and better than all baseline methods in the nonlinear case (Figure \ref{fig:clus:rand}).
The histograms of the predicted \(k\) according to the VRC show that our kernel performs better than all baseline methods in the linear case (Figure \ref{fig:clus:preds}, and the same holds for the unshown nonlinear case).

These results also show that, despite being defined in terms of equivalence classes of causal graphs, our kernel performs reasonably well at clustering samples according to individual DAGs.
This is perhaps explained by the fact that the clusters are detectable not only by looking at their causal structure but also by simply looking at their mean in the sample space (indicated by plain \(k\)-means' performance being better than chance), or perhaps by the fact that different random DAGs tend to belong to different equivalence classes (especially as the number of nodes increases).

Furthermore, we can gain some intuition for how the kernel works, i.e., how the samples look in the kernel space and how this representation relates to its clustering performance, by using kernel principal component analysis (kPCA) \citep{scholkopf1998nonlinear}.
Figure \ref{fig:pca:lin} shows projection onto the first two principal components (according to (linear) PCA) of one of the linear data sets, while Figure \ref{fig:pca:depcon} shows the same thing but using kPCA with our kernel.
Samples from the different causal models (represented by different colors) are much more easily separated when using our kernel.

\begin{figure}
  \centering
  \subfigure{\centering\includegraphics[width=.4\textwidth]{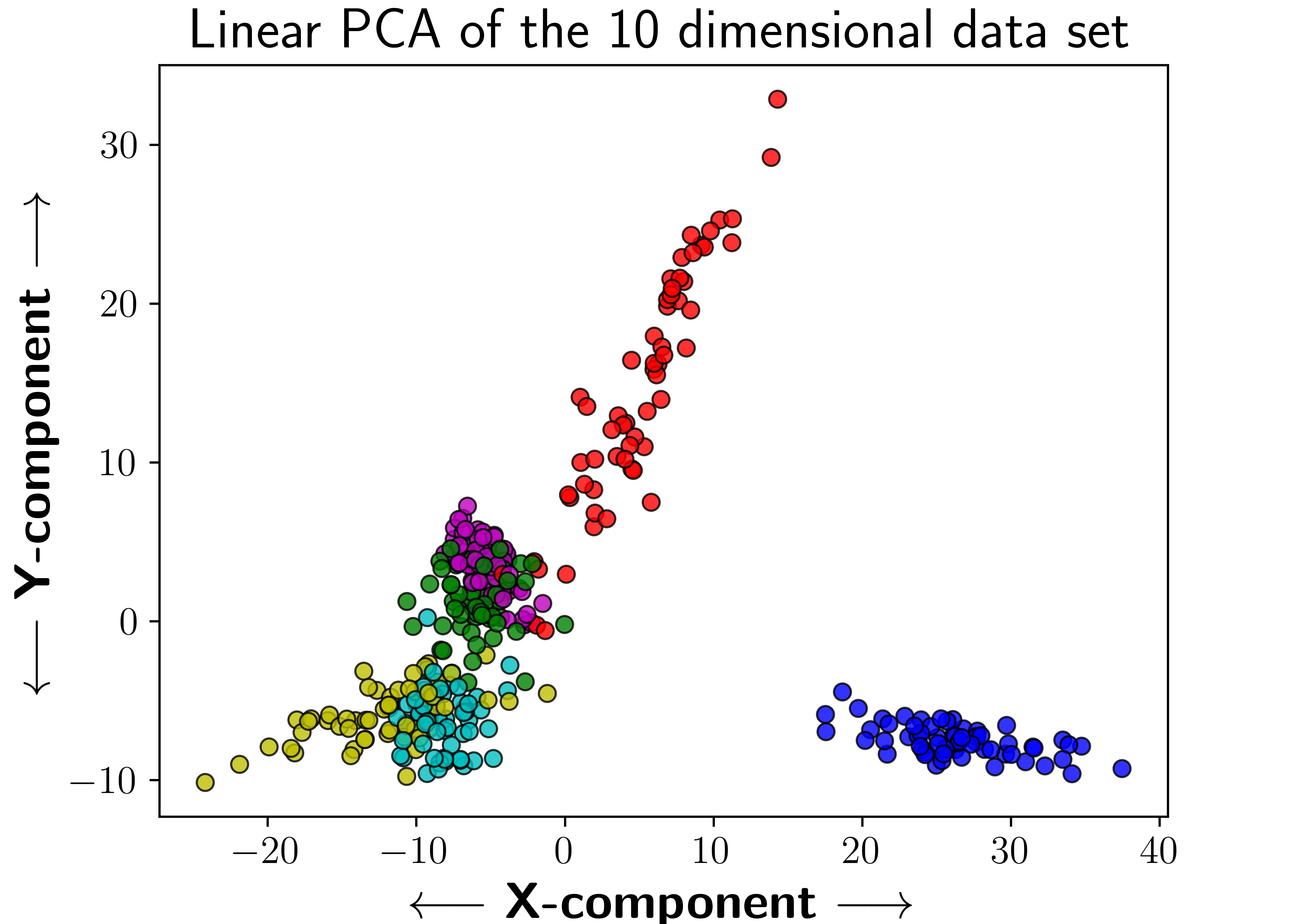}\label{fig:pca:lin}}\hfil%
  \subfigure{\centering\includegraphics[width=.4\textwidth]{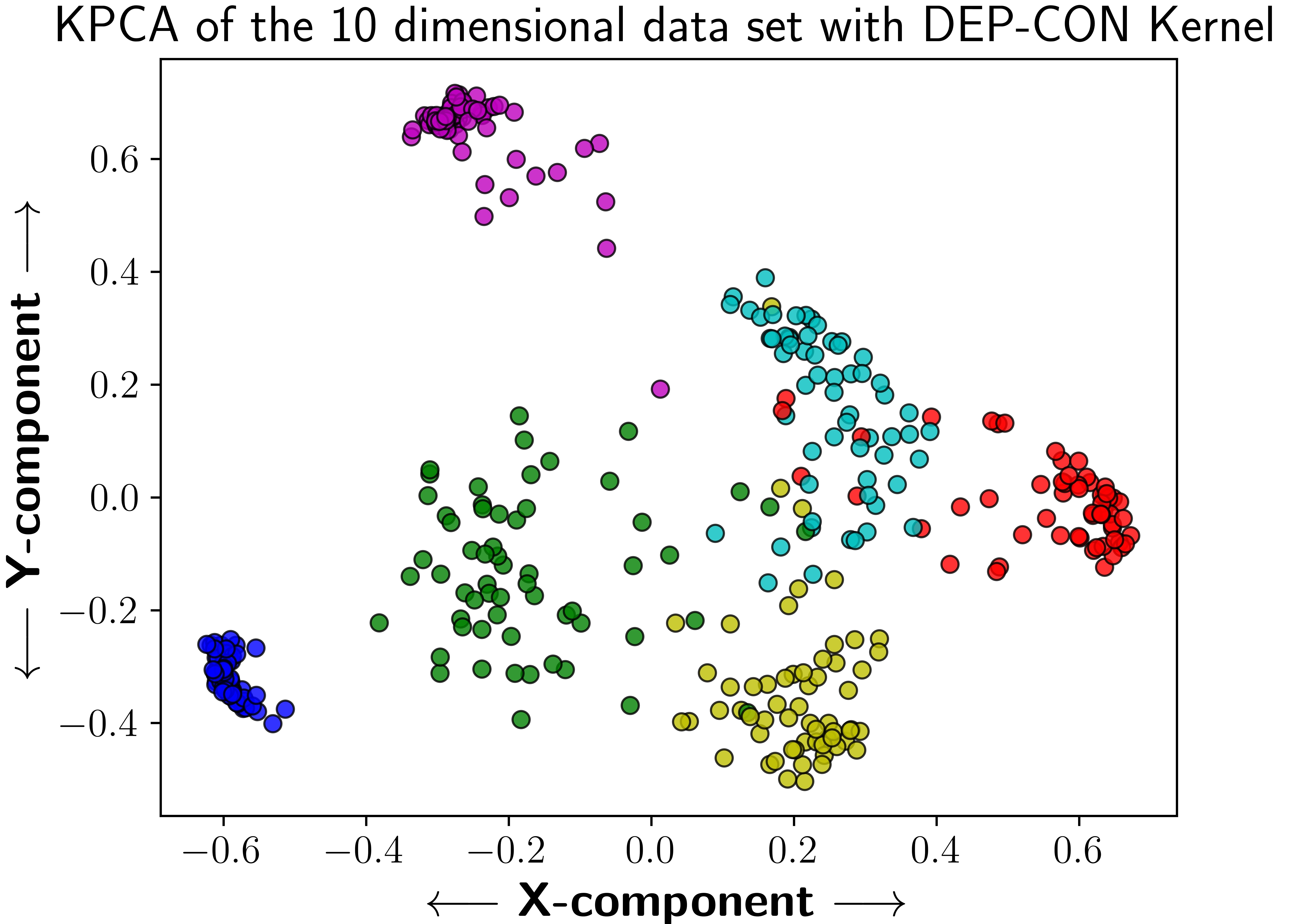}\label{fig:pca:depcon}}
  \caption{Same data visualized with \textit{(a)} linear PCA versus \textit{(b)} dependence contribution kernel PCA.}
  \label{fig:pca}
\end{figure}

\subsection{Real data}
\label{sec:real-data}


We use kernel \(k\)-means with our dependence contribution kernel to cluster a gene expression data set and then use the measurement dependence inducing latent (MeDIL) causal model framework for structure learning within each cluster \citep{mcm_paper_short}.
The goal of causal clustering here is to reason about the different latent transcription factor (TF) networks governing gene expression \cite[see][for other latent causal model approaches to learning TF networks]{verny2017learning,hackett2020learning}.
The original data set comes from \cite{Iyer_1999} and can be found at \url{genome-www.stanford.edu/serum/data/fig2clusterdata.txt}, with subsequent analysis by \cite{Dhillon_2003,Dhillon_2004}.
Our code is available at \url{https://causal.dev/code/fibroblast_clustering.py}.

The data consists of the measured gene expression levels of 517 different genes
, measured at 11 different time points, i.e., there are 517 samples and 11 different features.
In genetics applications, it is not unusual to consider genes to be samples and expression (over time) to be features---indeed the three previous analyses of this data all have this approach---and the intuition is simply that we wish to cluster genes based on patterns in their expression levels over time, in order to identify subsets of genes that are controlled by the same gene regulatory network.
Such data exemplifies the structurally heterogeneous populations discussed in Section \ref{sec:intro}:
different genes can of course be regulated by different TFs, and so we can better represent the data by first clustering it into subpopulations that are more homogeneous and then performing causal structure learning on each subpopulation.

For clustering, we used \(k=6\), which we found by looking at both the VRC and the Silhouette Coefficients \citep{rousseeuw1987silhouettes}, computed with the scikit-learn machine learning toolbox \citep{scikit-learn}.
We implemented (unweighted) kernel \(k\)-means ourselves, using the pseudocode given by \cite{Dhillon_2004}. 
We then used the \texttt{MeDIL} \citep{medil_pgm_demo} package to learn the dependence structure and latent causal models for each cluster
.

Figure \ref{fig:causal} shows an example of our results for three of the six gene clusters: Figure \ref{fig:causal:covs} shows their distance covariance heatmaps and estimated nonlinear dependence structure (so the axes are the 11 different features, i.e., the time, in hours, at which gene expression level was measured), while Figure \ref{fig:causal:dags} shows their corresponding causal structures, with measurement variables \(M_0\)--\(M_{10}\) for each of the features and learned latent variables \(L\) for different posited TFs.
\begin{figure}                 
  \centering
  \subfigure{\includegraphics[width=.33\textwidth]{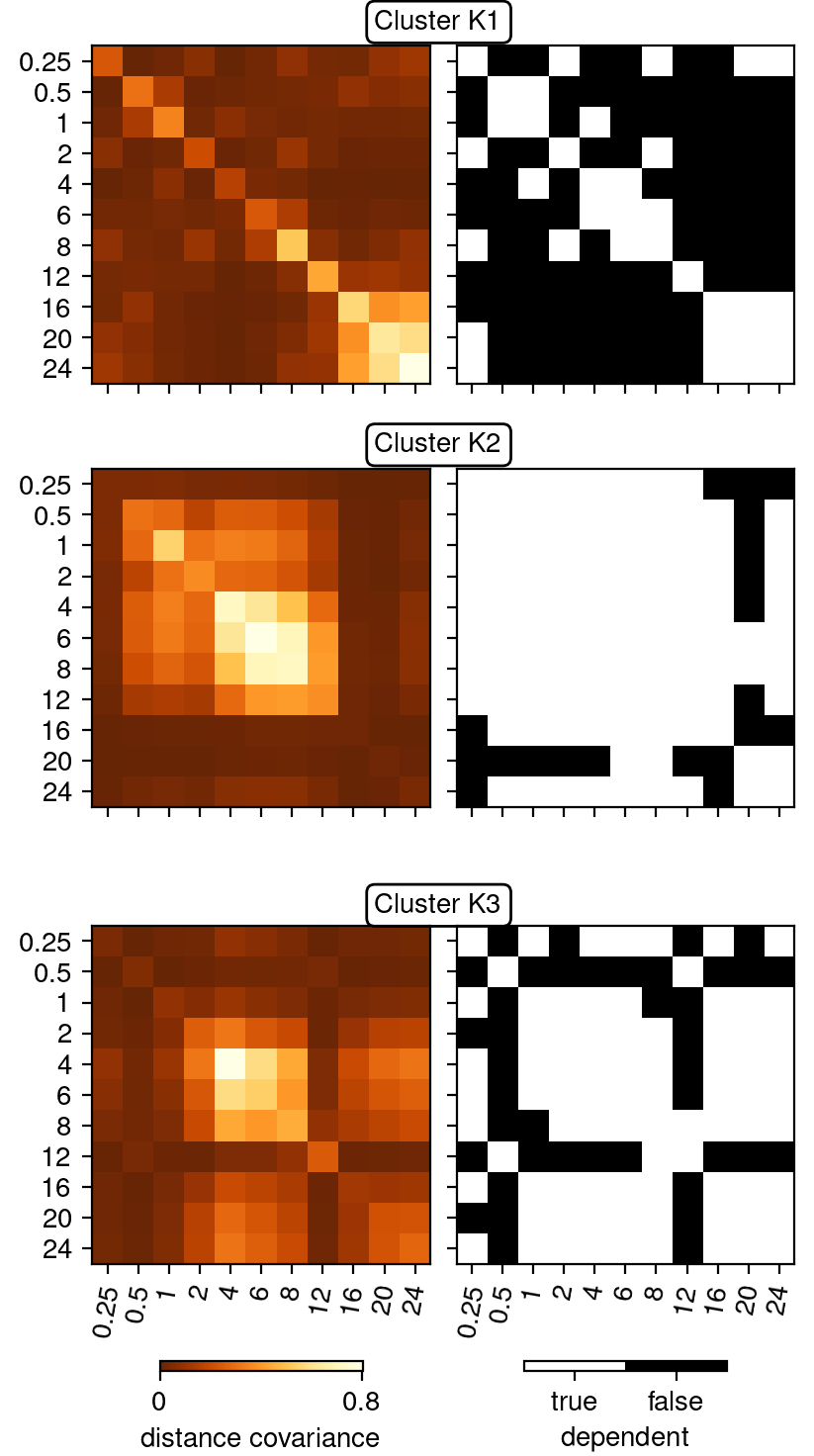}\label{fig:causal:covs}}%
  \subfigure{\includegraphics[width=.66\textwidth]{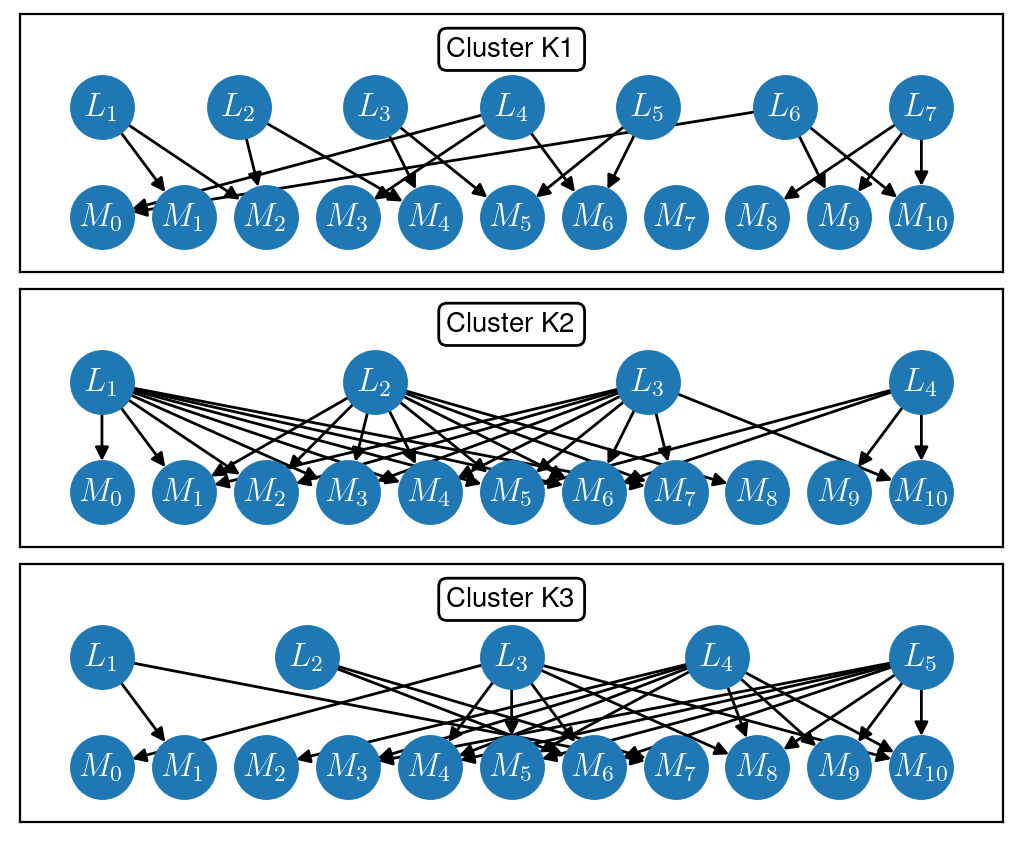}\label{fig:causal:dags}}
\caption{Results of dependence contribution kernel clustering with significance level \(\alpha = 0.1\).
}
\label{fig:causal}
\end{figure}

The results show a clear difference in causal structure for the different clusters and allow us to reason about the latent TFs regulating genes in different clusters:
notice that the latents in cluster K1 each cause only two or three measurement variables that tend to be close together---e.g., \(L_1\) causes \(M_1\) and \(M_2\), indicating the TF corresponding to \(L_1\) is ``short-acting'', only affecting gene expression from 30 minutes (\(M_1\)) to 1 hour (\(M_2\)) after serum exposure;
in contrast, the latents in cluster K3 each cause between two and seven measurement variables that tend to be more spread out---e.g., \(L_1\) causes \(M_1\) and \(M_7\), indicating the corresponding TF is more complicated, ``long-acting'' but not continuously so, affecting gene expression 30 minutes (\(M_1\)) and 12 hours (\(M_7\)) after serum exposure, but independently of gene expression in the time between.

Our results are especially noteworthy compared to what happens if one ignores the heterogeneity of the data and learns a single causal structure for the entire data set without first clustering with our kernel into structurally homogeneous subpopulations:
in that case, all of the measurement variables are dependent, with a single latent causing them all, and no meaningful conclusions can be drawn about how unmeasured transcription factors regulate measured gene expression, i.e., heterogeneity obscures the underlying causal structures.

In summary, our causal clustering analysis reveals which subpopulations (clusters) of genes have similar latent TF networks as well as how the TF networks differ between clusters---information that is obscured when analyzing the structurally heterogeneous data set as a whole.
Additionally, our kernel's ability to measure similarity of nonlinear dependence structure makes it more sensitive than previous analyses of this data set using linear correlation \citep{Dhillon_2003,Dhillon_2004}.


\section{Discussion}
\label{sec:discussion}

We address the problem of causal clustering---that is, finding the different causal structures underlying a structurally heterogeneous data set.
Our main contribution is to develop the \textit{dependence contribution kernel} and prove its suitability for the causal clustering task.
This allows us to first use the kernel with existing clustering methods, such as kernel \(k\)-means or DBSCAN, to identify homogeneous subpopulations.
Then we use existing causal structure learning methods on each subpopulation.
The kernel guarantees that each subpopulation is more structurally homogeneous and therefore that the learned causal structures better capture the true causal structures within the data than if a single model were learned for the entire heterogeneous population---however, increasingly homogeneous subpopulations comes at the cost of decreasing sample sizes within each subpopulation, so (as is generally the case in clustering) care should be taken when choosing the parameters for whichever clustering method is used.

Furthermore, we prove several interesting theoretical properties of our kernel, including (i) that it can be used as a statistical test for the hypothesis that two sets of samples come from different causal structures, as well as (ii) how it induces a metric space that is isometric to the one defined by Hamming distance between ancestral graphs, i.e., comparing sets of samples with our kernel is equivalent to first estimating the causal graphs of the different sets and then comparing those graphs.
Beyond the practical applications of our kernel, as shown by our application in reasoning about latent transcription factor networks that regulate gene expression, this work also draws from and suggests further fruitful connections between a variety of fields, including causal inference, kernel methods, and algebraic statistics.
Finally, we emphasize that though clustering is our motivating application in this paper, the dependence contribution kernel can be used in the full variety of machine learning tasks solvable with kernel methods, including, e.g., data visualization/reduction with kernel principal component analysis and classification tasks using kernel support vector machines.

\acks{We thank Anja Meunier (University of Vienna) and Liam Solus (KTH Royal Institute of Technology) for helpful discussions and comments on a previous draft.
This work is is partially supported by the Wallenberg AI, Autonomous Systems and Software Program (WASP) funded by the Knut and Alice Wallenberg Foundation.}

\bibliography{clustering}






\end{document}